\theoremstyle{plain}
\newtheorem{theorem}{Theorem}[section]
\newtheorem{lemma}[theorem]{Lemma}
\newtheorem{corollary}[theorem]{Corollary}
\theoremstyle{definition}
\newtheorem{definition}[theorem]{Definition}
\newtheorem{assumption}[theorem]{Assumption}
\theoremstyle{remark}
\icmltitlerunning{Regularized Distribution Matching Distillation for One-step Unpaired Image-to-Image Translation}
\begin{document}
\newcommand{\x}{\boldsymbol{x}}
\newcommand{\xt}{\boldsymbol{x}_t}
\newcommand{\y}{\boldsymbol{y}}
\newcommand{\yt}{\boldsymbol{y}_t}
\newcommand{\bu}{\boldsymbol{u}}
\newcommand{\bv}{\boldsymbol{v}}
\newcommand{\unit}{\mathbf{1}}

\newcommand{\f}{\boldsymbol{f}}
\newcommand{\w}{\boldsymbol{w}}
\newcommand{\wt}{\boldsymbol{w}_t}
\newcommand{\Tmax}{T}
\newcommand{\sigmax}{\sigma_{\text{max}}}
\newcommand{\rmd}{\mathrm{d}}

\newcommand{\score}{\nabla \log}
\newcommand{\scorex}{\nabla_{\x} \log}
\newcommand{\scorey}{\nabla_{\y} \log}
\newcommand{\scorext}{\nabla_{\xt} \log}
\newcommand{\scoreyt}{\nabla_{\yt} \log}
\newcommand{\pnoise}{p^{\text{noise}}}
\newcommand{\pfake}{p^{\phi}}
\newcommand{\preal}{p^{\text{real}}}
\newcommand{\pgen}{p^{G}}
\newcommand{\ptheta}{p^{\theta}}
\newcommand{\px}{p^{\x}}
\newcommand{\py}{p^{\y}}
\newcommand{\psource}{p^{\mathcal{S}}}
\newcommand{\ptarget}{p^{\mathcal{T}}}
\newcommand{\s}{\boldsymbol{s}}
\newcommand{\st}{\boldsymbol{s}_t}

\newcommand{\D}{D}
\newcommand{\sfake}{\boldsymbol{s}^{\phi}}
\newcommand{\sreal}{\boldsymbol{s}^{\text{real}}}
\newcommand{\sdata}{s^{\text{data}}}
\newcommand{\stheta}{\s^\theta}
\newcommand{\starget}{\s^{\mathcal{T}}}

\newcommand{\Dreal}{D^{\text{real}}}
\newcommand{\Dfake}{D^{\phi}}

\newcommand{\KL}{\text{KL}}
\newcommand{\klweight}{\omega_t\,}
\newcommand{\smweight}{\beta_t\,}

\newcommand{\E}{\mathbb{E}}
\newcommand{\N}{\mathcal{N}}
\newcommand{\eps}{\varepsilon}
\newcommand{\z}{\boldsymbol{z}}

\newcommand{\map}{G_\theta}
\newcommand{\loss}{\mathcal{L}}
% \renewcommand{\phi}{\varphi}

% maths\
\newcommand{\R}{\mathbb{R}}
\newcommand{\Rd}{\mathbb{R}^d}
\newcommand{\C}{\mathcal{C}}
\newcommand{\Cb}{\mathcal{C}_b}
\newcommand{\weak}{\xrightarrow[]{w}}

% Vanyas definitions
\newcommand{\pdata}{p^{\text{data}}}
\newcommand{\tinit}{\tau}

\twocolumn[
\icmltitle{Regularized Distribution Matching Distillation for One-step Unpaired Image-to-Image Translation}

% It is OKAY to include author information, even for blind
% submissions: the style file will automatically remove it for you
% unless you've provided the [accepted] option to the icml2024
% package.

% List of affiliations: The first argument should be a (short)
% identifier you will use later to specify author affiliations
% Academic affiliations should list Department, University, City, Region, Country
% Industry affiliations should list Company, City, Region, Country

% You can specify symbols, otherwise they are numbered in order.
% Ideally, you should not use this facility. Affiliations will be numbered
% in order of appearance and this is the preferred way.
\icmlsetsymbol{equal}{*}

\begin{icmlauthorlist}
\icmlauthor{Denis Rakitin}{equal,HSE}
\icmlauthor{Ivan Shchekotov}{equal,HSE,SK}
\icmlauthor{Dmitry Vetrov}{CUB}
% \icmlauthor{Firstname4 Lastname4}{sch}
% \icmlauthor{Firstname5 Lastname5}{yyy}
% \icmlauthor{Firstname6 Lastname6}{sch,yyy,comp}
% \icmlauthor{Firstname7 Lastname7}{comp}
%\icmlauthor{}{sch}
% \icmlauthor{Firstname8 Lastname8}{sch}
% \icmlauthor{Firstname8 Lastname8}{yyy,comp}
%\icmlauthor{}{sch}
%\icmlauthor{}{sch}
\end{icmlauthorlist}

\icmlaffiliation{HSE}{HSE University, Moscow, Russia}
\icmlaffiliation{CUB}{Constructor University, Bremen, Germany}
\icmlaffiliation{SK}{Skolkovo Institute of Science and Technology, Moscow, Russia}

\icmlcorrespondingauthor{Denis Rakitin}{rakitindenis32@gmail.com}
\icmlcorrespondingauthor{Ivan Shchekotov}{ivanxshchekotov@gmail.com}

% You may provide any keywords that you
% find helpful for describing your paper; these are used to populate
% the "keywords" metadata in the PDF but will not be shown in the document
\icmlkeywords{Machine Learning, ICML}

\vskip 0.3in
]

% this must go after the closing bracket ] following \twocolumn[ ...

% This command actually creates the footnote in the first column
% listing the affiliations and the copyright notice.
% The command takes one argument, which is text to display at the start of the footnote.
% The \icmlEqualContribution command is standard text for equal contribution.
% Remove it (just {}) if you do not need this facility.

%\printAffiliationsAndNotice{}  % leave blank if no need to mention equal contribution
\printAffiliationsAndNotice{\icmlEqualContribution} % otherwise use the standard text.

\begin{abstract}
Diffusion distillation methods aim to compress the diffusion models into efficient one-step generators while trying to preserve quality. Among them, Distribution Matching Distillation (DMD) offers a suitable framework for training general-form one-step generators, applicable beyond unconditional generation. In this work, we introduce its modification, called Regularized Distribution Matching Distillation, applicable to unpaired image-to-image problems. We demonstrate its empirical performance in application to several translation tasks, including 2D examples and I2I between different image datasets, where it performs on par or better than multi-step diffusion baselines.
\end{abstract}

\section{Introduction}
\label{sec:intro}
One of the global problems of contemporary generative modeling consists of solving the so-called generative learning trilemma~\cite{xiao2021tackling}. It states that a perfect generative model should possess three desirable properties: high generation quality, mode coverage/diversity of samples and efficient inference. Today, most model families tend to have only 2 of the 3. Generative Adversarial Networks (GANs)~\cite{goodfellow2014generative} have fast inference and produce high-quality samples but tend to underrepresent some modes of the data set~\cite{metz2016unrolled, arjovsky2017wasserstein}. Variational Autoencoders (VAEs)~\cite{kingma2013auto, rezende2014stochastic} efficiently produce diverse samples while suffering from insufficient generation quality. Finally, diffusion-based generative models~\cite{ho2020denoising, song2020score, dhariwal2021diffusion,  karras2022elucidating} achieve SOTA generative metrics and visual quality yet require running a high-cost multi-step inference procedure.

Satisfying these three properties is essential in numerous generative computer vision tasks beyond unconditional generation. One is image-to-image (I2I) translation~\cite{isola2017image, zhu2017unpaired}, which consists of learning a mapping between two distributions that preserves the cross-domain properties of an input object while appropriately changing its source-domain features to match the target. Most examples, like transforming cats into dogs~\cite{choi2020stargan} or human faces into anime~\cite{korotin2022neural} belong to the \emph{unpaired} I2I because they do not assume ground truth pairs of objects in the data set. As in unconditional generation, unpaired I2I methods were previously centered around GANs~\cite{huang2018multimodal, park2020contrastive, choi2020stargan, zheng2022ittr}, but now tend to be competed and surpassed by diffusion-based counterparts~\cite{choi2021ilvr, meng2021sdedit, zhao2022egsde, wu2023latent}. Most of these methods build on top of the original diffusion sampling procedure and tend to have high generation time as a consequence.

Since diffusion models succeed in both desirable qualitative properties of the trilemma, one could theoretically obtain samples of the desired quality level given sufficient computational resources. It makes the acceleration of diffusion models an appealing approach to satisfy all of the aforementioned requirements, including efficient inference. %To this day, most of diffusion models require $20-60$ function evaluations for producing SOTA samples~\cite{karras2022elucidating, karras2023analyzing, esser2024scaling}. This is a big step forward compared to the original $100-1000$~\cite{ho2020denoising, dhariwal2021diffusion}, but still is computationally expensive.

Recently introduced diffusion distillation techniques~\cite{song2023consistency, kim2023consistency, sauer2023adversarial} address this challenge by compressing diffusion models into one-step students with (hopefully) similar qualitative and quantitative properties. Among them, Distribution Matching Distillation (DMD)~\cite{yin2023one, nguyen2023swiftbrush} offers an expressive and general framework for training free-form generators based on techniques initially introduced for text-to-3D~\cite{poole2022dreamfusion, wang2024prolificdreamer}. \emph{Free-form} here means that the method does not make any assumptions about the generator's structure and distribution at the input. This crucial observation opens a large space for its applications beyond the $noise \rightarrow data$ problems.

\begin{figure*}[!t]
\begin{center}
\includegraphics[width=\textwidth]{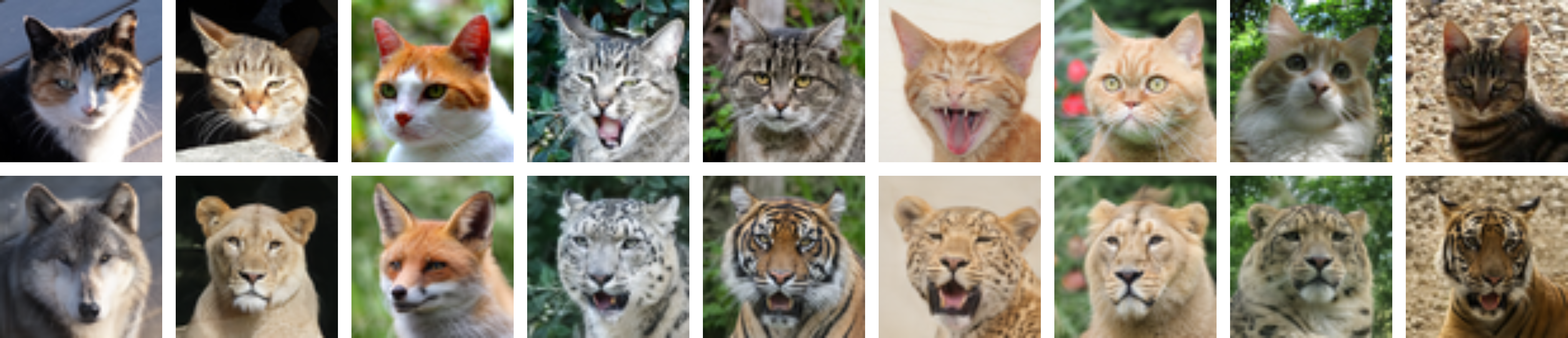}
\caption{Illustration of performance of the proposed RDMD model on $cat\rightarrow wild$ translation problem/  from the AFHQv2~\cite{choi2020stargan} data set.
}
\label{fig:title_pic}
\end{center}
\end{figure*}

In this work, we introduce the modification of DMD, called \emph{Regularized Distribution Matching Distillation} (RDMD), that applies to the unpaired I2I problems. To achieve this, we replace the generator's input noise with the source data samples to further translate them into the target. We maintain correspondence between the generator's input and output by regularizing the objective with the transport cost between them. As our main contributions, we
\begin{enumerate}
    \item Propose a one-step diffusion-based method for unpaired I2I; 
    \item Theoretically verify it by establishing its connection with optimal transport~\cite{villani2009optimal, peyre2019computational};
    \item Ablate its qualitative properties and demonstrate its generation quality on 2D and image-to-image examples, where it obtains comparable or better results than the multi-step counterparts.
\end{enumerate}

\section{Background}
\label{sec:background}

\subsection{Diffusion Models}
Diffusion models \cite{song2019generative, ho2020denoising} are a class of models that sequentially perturb data distribution $\pdata$ with Gaussian noise, transforming it into some tractable unstructured distribution, which contains no information about initial domain. 

Using this distribution as a prior and reversing the process by progressively removing the noise yields a sampling procedure from $\pdata$. A convenient way to formalize diffusion models is through stochastic differential equations (SDEs) \cite{song2020score}, which describe continuous-time stochastic dynamics of particles.
The forward process is commonly defined as the Variance Exploding (VE) SDE\footnote{The other popular forward processes (e.g. VP-SDE) can be obtained by scaling the VE-SDE.}
\begin{equation}
\label{eq:sde_fwd}
    \rmd \xt = g(t) \rmd \wt,
\end{equation}
where $t \in [0, \Tmax]$, $\x_0 \sim \pdata$, $g(\cdot)$ is the scalar diffusion coefficient and $\rmd \wt$ is the differential of a standard Wiener process.
We denote by $p_t(\xt)$ marginal distribution of $\xt$, so that $\pdata(\x_0) = p_0(\x_0)$. $p_{\Tmax}$ acts as an unstructured prior distribution that we can sample from.

Conveniently, SDE dynamics can be represented via a deterministic counterpart given by an ordinary differential equation (ODE), which yields the same marginal distributions $p_t(\xt)$ as in Equation~\ref{eq:sde_fwd}, given the same initial distribution $p_0(\x_0) = \pdata(\x_0)$:
\begin{equation}
\label{eq:pf_ode}
\rmd \xt = -\frac{1}{2}g^2(t) \scorex p_t(\xt) \rmd t,
\end{equation}
where $\scorext p_t(\xt)$ is called the \emph{score function} of $p_t(\xt)$. Equation~\ref{eq:pf_ode} is also called Probability Flow ODE (PF-ODE). The ODE formulation allows us to obtain a backward process by simply reversing velocity of the particle. In particular, we can obtain samples from $\pdata$ by taking $\x_{\Tmax} \sim p_{\Tmax}$ and running the PF-ODE backwards in time, given access to the score function.

However, in the case of generative modeling $\scorex p_t(\xt)$ is intractable due to $\pdata$ being intractable, and thus cannot be used directly in Equation~\ref{eq:pf_ode}. Under mild regularity conditions, the unconditional score can be expressed by:
\begin{equation}
\label{eq:uncond_as_condexp}
\scorext p_t (\x_t) = \E_{p_{0 | t}(\x_0 | \x_t)} \left[\s_{t | 0}(\x_t | \x_0)\right],
\end{equation}
where $\s_{t | 0}(\x_t | \x_0) = \scorext p_{t | 0}(\x_t | \x_0)$ is the conditional distribution (also called perturbation kernel) and $p_{0 | t}(\x_0 | \x_t)$ is the corresponding posterior distribution. 
The perturbation kernel in the case of VE-SDE corresponds to simply adding an independent Gaussian noise: 
\begin{equation}
p_{t | 0}(\x_t | \x_0) = \N(\x_t | \x_0, \, \sigma_t^2 I), \: \sigma_t^2 = \int_{0}^{t} g^2(s) \rmd s.
\end{equation}
Denoising Score Matching (DSM) \cite{Vincent2011ACB} utilizes Equation~\ref{eq:uncond_as_condexp} and approximates $\scorext p_t (\x_t)$ with the score model $\st^\theta(\x_t)$ via L2-regression minimization:
\begin{equation}
\label{eq:dsm_loss}
\int\limits_{0}^{\Tmax}\smweight \E_{p_{0, t}(\x_0, \x_t)} \| \st^\theta(\x_t) - \s_{t | 0}(\x_t | \x_0) \|^2 \rmd t \rightarrow \min\limits_{\theta},
\end{equation}
where $\smweight$ is some positive weighting function. The minimum in the Equation~\ref{eq:dsm_loss} is obtained at $\st^\theta(\x_t) = \scorext p_t(\x_t)$. Given a suitable parameterization of the score network, DSM objective is equivalent to
\begin{equation}
\label{eq:dsm_loss_mean}
\int\limits_{0}^{\Tmax} \smweight \E_{p_{0, t}(\x_0, \x_t)}\| \D^{\theta}_{t}(\x_t) - \x_0 \|^2 \rmd t \rightarrow \min\limits_{\theta},
\end{equation}
where $\D^{\theta}_{t}$ is called the denoising network (or simply denoiser) and is related to the score network via $\st^\theta(\x_t) = \left(\x_t - \D_t^\theta(\x_t)\right)/ \sigma_t^2$. Therefore, Denoising Score Matching procedure consists of learning to denoise images at various noise levels.

Having obtained $\st^\theta(\x_t)$, we solve Equation~\ref{eq:pf_ode} backward in time, starting from $x_{\Tmax} \sim \N(0, \sigma_{\Tmax}^2 I)$ to obtain approximate samples from $\pdata$.
% We stick to VE:
% \[
%     \rmd \xt = g(t) \rmd \wt, \:\: \rmd \yt = -\frac{1}{2}g^2(t) \scorey p_t(\yt) \rmd t
% \]

% Conditional distribution (transition kernel):
% \[
%     p_{t | 0}(\x_t | \x_0) = \N(\x_t | \x_0, \, \sigma_t^2 I), \: \sigma_t^2 = \int\limits_{0}^{t} g^2(s) \rmd s
% \]
% Conditional score:
% \[
%     \scorext p_{t | 0}(\x_t | \x_0) = -\frac{1}{\sigma_t^2}(\x_t -\x_0)
% \]

% Unconditional score as condexp:
% \[
%     \scorext p_t (\x_t) = \E_{p_{0 | t}(\x_0 | \x_t)} \scorext p_{t | 0}(\x_t | \x_0)
% \]
% Condexp is the best L2 predictor, so train on:
% \[
%     \int\limits_{0}^{\Tmax}\E_{p_{0, t}(\x_0, \x_t)} \| \st^\theta(\x_t) - \scorext p_{t | 0}(\x_t | \x_0) \|^2 \rmd t \rightarrow \min\limits_{\theta}
% \]
% Parameterize score:
% \[
%     \st^\theta(\x_t) = -\frac{1}{\sigma_t^2}\left(\x_t - \D_t^\theta(\x_t)\right)
% \]
% Denoising score matching:
% \[
%     \int\limits_{0}^{\Tmax} \smweight \E_{p_{0, t}(\x_0, \x_t)}\| \D^{\theta}_{t}(\x_t) - \x_0 \|^2 \rmd t \rightarrow \min\limits_{\theta}
% \]
\subsection{Distribution Matching Distillation}
Distribution Matching Distillation~\cite{yin2023one} is the core technique of this paper. Essentially, it aims to train a generator $\map(\z)$ on matching the given distribution $\preal$. Its input $\z$ is assumed to come from a tractable input distribution $\pnoise$. Formally, matching two distributions can be achieved by optimizing the KL divergence between the distribution\footnote{The superscript $\theta$ in $\ptheta$ does not mean introducing the additional neural model of density but is rather used to emphasize its dependence on the generator.} $\ptheta$ of $\map(\z)$ and the data distribution $\preal$:
\begin{equation}
\label{eq:dmd_kl}
    \KL(\ptheta \,\|\, \preal) = \E_{\pnoise(\z)} \log \frac{\ptheta(\map(\z))}{\preal(\map(\z))} \rightarrow \min\limits_{\theta}
\end{equation}
Differentiating it by the parameters $\theta$, using the chain rule, one encounters a summand, containing the difference $\s^\theta(\map(\z)) - \sreal(\map(\z))$ between the score functions of the corresponding distributions~\footnote{Note that there is one more summand, which contains the gradient $\nabla_\theta \log \ptheta$ with respect to the log-density parameters. We do not discuss how to approximate it, because it will be further omitted.}. The pure data score function can be very non-smooth due to the Manifold Hypothesis~\cite{tenenbaum2000global} and is generally hard to train~\cite{song2019generative}, so the authors make the problem accessible through the diffusion framework. To this end, they replace the original loss with an ensemble of KL divergences between distributions, perturbed by the forward diffusion process:
\begin{equation}
\label{eq:dmd_kl_t} \int_{0}^{\Tmax} \klweight \KL\Big(\ptheta_t \,\|\, \preal_t\Big) \rmd t,
\end{equation}
Here, $\klweight$ is a weighting function, $\ptheta_t$ and $\preal_t$ are the perturbed versions of the generator distribution and $\preal$ up to the time step $t$. In theory, the minima of Equation~\ref{eq:dmd_kl_t} objective coincides~\citep[Thm.~1]{wang2024prolificdreamer} with the original minima from Equation~\ref{eq:dmd_kl}. Meanwhile in practice taking the gradient of the new loss, which can be equivalently written as
\begin{equation}
\label{eq:dmd_kl_t_2} 
    \int\limits_{0}^{\Tmax} \klweight \E_{\mathcal{N}(\eps | 0, I)\pnoise(\z)} \log \frac{\ptheta_t(\map(\z) + \sigma_t \eps)}{\preal_t(\map(\z) + \sigma_t\eps)} \,\rmd t,
\end{equation}
results in obtaining difference $\stheta_t(\map(\z) + \sigma_t \eps) - \sreal_t(\map(\z) + \sigma_t \eps)$, which can be approximated by the diffusion models.

Given this, authors approximate $\sreal_t$ with the pre-trained diffusion model, which we will denote $\sreal_t$ as well with a slight abuse of notation. The whole procedure now can be considered as distillation of $\sreal_t$ into $\map$. At the same time, $\stheta_t$ is the score of the noised distribution of the generator, which is intractable and therefore approximated by an additional "fake" diffusion model $\sfake_t$ and the corresponding denoiser $\Dfake_t$. It is trained on the standard denoising score matching objective with the generator's samples at the input. The joint training procedure is essentially the coordinate descent
\begin{equation}
\label{eq:dmd_training}
\begin{cases}
    \int\limits_{0}^{\Tmax} \klweight \E_{\eps, \z} \log \cfrac{\pfake_t(\map(\z) + \sigma_t \eps)}{\preal_t(\map(\z) + \sigma_t\eps)} \,\rmd t \rightarrow \min\limits_{\theta}; \\
    \int\limits_{0}^{\Tmax} \smweight \E_{\eps, \z} \| \Dfake_t(\map(\z) + \sigma_t \eps) - \map(\z) \|^2 \,\rmd t \rightarrow \min\limits_{\phi},
\end{cases}
\end{equation}
where the stochastic gradient with respect to the fake network is calculated by backpropagation and the generator's stochastic gradient is calculated directly as
\begin{equation}
\label{eq:dmd_gradient}
\klweight \left(\sfake_t - \sreal_t\right) \nabla_\theta \map(\z),
\end{equation}
where the scores are evaluated in the point $\map(\z) + \sigma_t \eps$. Minimization of the fake network's objective ensures $\sfake_t = \stheta_t \Leftrightarrow \pfake_t = \ptheta_t$. At this condition, the generator's objective is equal to the original ensemble of KL divergences from Equation~\ref{eq:dmd_kl_t}, minimizing which solves the initial problem and implies $\ptheta = \preal$.
\subsection{Unpaired I2I and optimal transport}
% When we refer to the unpaired I2I problem, it is not entirely clear what a mapping preserving cross-domain properties of the input domain and adapting source-domain features to the target should look like. One way to proceed with formalizing the notion of I2I mapping is by considering it from the standpoint of Optimal Transport framework.
The problem of unpaired I2I consists of learning a mapping $G$ between the \emph{source} distribution $\psource$ and the \emph{target} distribution $\ptarget$ given the corresponding independent data sets of samples. When optimized, the mapping should appropriately adapt $G(\x)$ to the target distribution $\ptarget$, while preserving the input's cross-domain features. However, from the first glance it is unclear what the preservation of cross-domain properties should be like.

One way to look at that formally is by introducing the notion of "transportation cost" $c(\cdot, \cdot)$ between the generator's input and output and saying that it should not be too large on average. In a practical I2I setting, we can choose $c(\cdot, \cdot)$ as any reasonable distance between images or their features that we aim to preserve, e.g. pixel-wise distance or difference between LPIPS \cite{zhang2018perceptual} embeddings.

Monge optimal transport (OT) problem \cite{villani2009optimal, santambrogio2015optimal} follows this reasoning and aims at finding the mapping with the least average transport cost among all the mappings that fit the target $\ptarget$:
\begin{equation}
\label{eq:monge_ot_problem}
\inf_{G} \left\{ \E_{\psource(\x)} c(\x, G(\x)) \: \lvert \: G(\x) \sim \ptarget \right\},
\end{equation}
which can be seen as a mathematical formalization of the I2I task. 

Under mild constraints, in the case when $\psource$ and $\ptarget$ have densities, the optimal transport map $G^*$ is bijective, differentiable, has differentiable inverse and thus satisfies the change of variables formula $\psource(\x) = \ptarget(G^*(\x))| \det\left(\nabla G^*(\x)\right)|$. This highly non-linear change of variables condition gives insight into why it is notoriously challenging to optimize Equation~\ref{eq:monge_ot_problem} directly.

% The main problem with Equation~\ref{eq:monge_ot_problem} is that it is notoriously difficult to optimize directly due to the high non-linearity of constraint imposed on mapping $G$.

% A more general treatment of the problem is provided by looking at Kantorovich OT formulation, which under provided conditions (existence of PDFs) coincides with Monge problem. (for more details see Appendix \ref{sec:theory}).
\section{Methodology}
\label{sec:method}
Our main goal is to adapt the DMD method for the unpaired I2I between an arbitrary source distribution $\psource$ and target distribution $\ptarget$.
\subsection{Regularized Distribution Matching Distillation}
\label{subsec:rdmd}
First, we note that the construction of DMD requires only having samples from the input distribution. Given this, we replace the Gaussian input $\pnoise$ by $\psource$, the data distribution $\pdata$ by $\ptarget$ and aim at optimizing
\begin{multline}
\label{eq:rdmd_kl_t_noreg}
    \loss(\theta) = \int\limits_0^{\Tmax} \klweight \KL \Big( p_t^\theta \,\|\, \ptarget_t \Big) \rmd t = \\
    =\int\limits_0^{\Tmax} \klweight \E_{\psource(\x) \mathcal{N}(\eps | 0, I)} \log \cfrac{\ptheta_t(\map(\x) + \sigma_t \eps)}{\ptarget_t(\map(\x) + \sigma_t \eps)} \,\rmd t,
\end{multline}
where $p^\theta_t$ and $\ptarget_t$ are now respectively the distribution of the generator output $\map(\x)$ and the target distribution $\ptarget$, perturbed by the forward process up to the timestep $t$.

Optimizing the objective in Equation~\ref{eq:rdmd_kl_t_noreg}, one obtains a generator, which takes $\x \sim\psource$ and outputs $\map(\x) \sim \ptarget$, so it performs the desired transfer between the two distributions. However, there are no guarantees that the input and the output will be related. Similarly to the OT problem (Equation~\ref{eq:monge_ot_problem}), we fix the issue by penalizing the transport cost between them. We obtain the following objective
\begin{equation}
\label{eq:rdmd_loss}
\loss^\lambda(\theta) = \loss(\theta) + \lambda\,\E_{\psource(\x)} c\left(\x, \map(\x)\right) \rightarrow\min\limits_{\theta},
\end{equation}
where $c(\cdot, \cdot)$ is the cost function, which describes the object properties that we aim to preserve after transfer,  and $\lambda$ is the regularization coefficient. Choosing the appropriate $\lambda$ will result in finding a balance between fitting the target distribution and preserving properties of the input.

As in DMD, we assume that the perturbed target distributions are  represented by a pre-trained diffusion model $\starget_t$ and approximate the generator distribution score $\stheta_t$ by the additional fake diffusion model $\sfake_t$. Analogous to the DMD procedure (Equation~\ref{eq:dmd_training}), we perform the coordinate descent in which, however, the generator objective is now regularized. We call the procedure \emph{Regularized Distribution Matching Distillation} (RDMD). Formally, we optimize
\begin{align}
\label{eq:rdmd_training}
\begin{cases}
    \int\limits_{0}^{\Tmax} \klweight \E_{\eps, \x} \log \cfrac{\pfake_t(\map(\x) + \sigma_t \eps)}{\ptarget_t(\map(\x) + \sigma_t \eps)} \, \rmd t \\
    \:\:\:\:\:\:\:\:\:\:\:\:\:\:\:\:\:\:\:\:\:\:\:\:\:\:\:\:\:\:\:\:\:\:\:\:+\:\lambda \, \E_{\psource(\x)} c\left(\x, \map(\x)\right) \rightarrow \min\limits_{\theta}; \\
    \int\limits_{0}^{\Tmax} \smweight \E_{\eps, \x} \| \Dfake_t(\map(\x) + \sigma_t \eps) - \map(\x) \|^2 \,\rmd t \rightarrow \min\limits_{\phi}.
\end{cases}
\end{align}
Given the optimal fake score $\sfake_t$, the generator's objective becomes equal to the desired loss in Equation~\ref{eq:rdmd_loss}, which validates the procedure.
\subsection{Analysis of the method}
\label{subseq:analysis}

\begin{figure*}[!t]
\begin{center}
\includegraphics[width=0.9\textwidth]{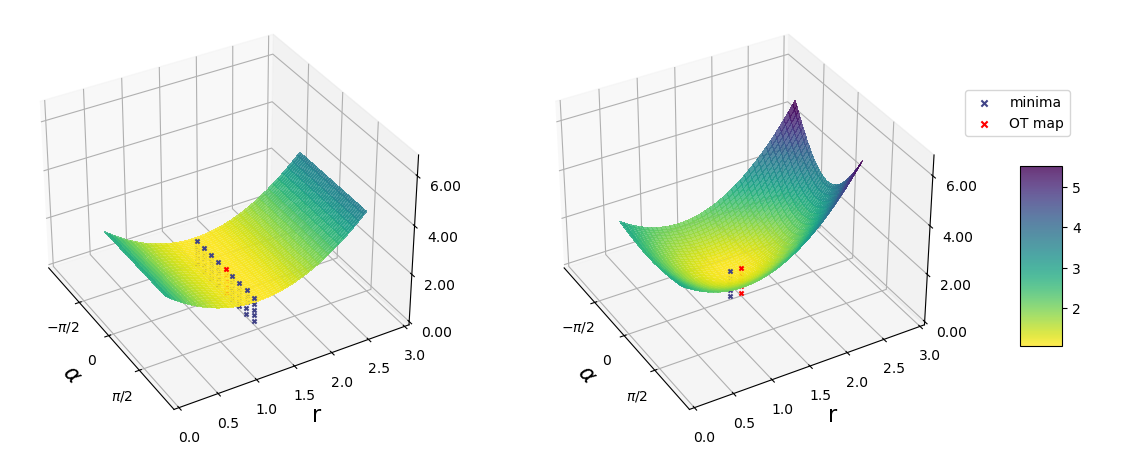}
\caption{Comparison of the DMD loss surfaces without (left) and with (right) transport cost regularization on a toy problem of translating $\mathcal{N}(0, I)$ to $\mathcal{N}(0, 1.5^2 I)$. We set the regularization coefficient $\lambda = 0.2$. The generator is parameterized as $r \cdot C(\alpha)$, where $C(\alpha)$ is the rotation matrix, corresponding to the angle $\alpha$. Minima at the left contains all orthogonal matrices, multiplied by $\sigma = 1.5$, while the minimum at the right is attained in the only point, which is close, but not equal, to the OT map. The surfaces are moved up for the sake of visualization.}
\label{fig:surface}
\end{center}
\end{figure*}

The optimization problem in Equation~\ref{eq:rdmd_loss} can be seen as the soft-constrained optimal transport, which balances between satisfying the output distribution constraint and preserving the original image properties. Moreover, if one takes $\lambda \rightarrow 0$, the objective essentially becomes equivalent to the Monge problem (Equation~\ref{eq:monge_ot_problem}). It can be seen by replacing the $\lambda$ coefficient before the transport cost with the $1/\lambda$ coefficient before the KL divergence. In the limit, it equals $+\infty$ whenever the generator output and the target distributions are different, which makes the corresponding problem hard-constrained and, therefore, equivalent to the original optimal transport problem. Based on this observation, we prove the following
\begin{theorem}
\label{thm1}
    Let $c(\x, \y)$ be the quadratic cost $\|\x - \y\|^2$ and $G^\lambda$ be the theoretical optimum in the problem~\ref{eq:rdmd_loss}. Then, under mild regularity conditions, it converges in probability (with respect to $\psource$) to the optimal transport map $G^*$, i.e.
    \begin{equation}
        G^\lambda \xrightarrow[\lambda \rightarrow 0]{\psource} G^*.
    \end{equation}
\end{theorem}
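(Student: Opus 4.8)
The plan is to treat Theorem~\ref{thm1} as a penalty / \(\Gamma\)-convergence statement: as \(\lambda\to 0\) the soft constraint \(\loss(\theta)\) is forced to vanish, so \(G^\lambda\) becomes an asymptotically admissible Monge map of near-minimal cost, and every limit must therefore be \emph{the} optimal map. First I would record the basic energy estimate. Since \(G^*_{\#}\psource=\ptarget\), the time-\(t\) perturbation of the output of \(G^*\) coincides with \(\ptarget_t\), hence \(\loss(G^*)=0\) and \(\loss^\lambda(G^*)=\lambda C^*\), where \(C^*:=\E_{\psource(\x)}\|\x-G^*(\x)\|^2\) is the optimal transport cost. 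Optimality of \(G^\lambda\) gives \(\loss(G^\lambda)+\lambda\,\E_{\psource(\x)}\|\x-G^\lambda(\x)\|^2\le\lambda C^*\), and since both summands are nonnegative we obtain at once \(\loss(G^\lambda)\le\lambda C^*\to 0\) and a uniform bound \(\E_{\psource(\x)}\|\x-G^\lambda(\x)\|^2\le C^*\). Combined with the assumed finiteness of the second moment of \(\psource\), the latter yields a uniform second-moment bound on \((G^\lambda)_{\#}\psource\), hence tightness of the induced plans \(\mu^\lambda:=(\mathrm{id},G^\lambda)_{\#}\psource\) on \(\Rd\times\Rd\).

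Next I would fix an arbitrary sequence \(\lambda_n\to 0\), extract a weak limit \(\mu^{\lambda_n}\weak\mu\) by tightness, and identify \(\mu\). Its first marginal is visibly \(\psource\); the crux is that its second marginal is \(\ptarget\). I would get this from \(\loss(G^{\lambda_n})\to 0\): the nonnegative integrand \(\klweight\KL(\ptheta_t\|\ptarget_t)\) tends to \(0\) in \(L^1([0,\Tmax])\), so along a further subsequence it vanishes for a.e.\ \(t\); picking one \(t>0\) with \(\klweight>0\), Pinsker's inequality gives \(\|\ptheta_t-\ptarget_t\|_{\mathrm{TV}}\to 0\), i.e.\ \((G^{\lambda_n})_{\#}\psource * \N(0,\sigma_t^2 I)\weak\ptarget * \N(0,\sigma_t^2 I)\). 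Together with tightness of \(\{(G^{\lambda_n})_{\#}\psource\}\) and injectivity of convolution with a fixed Gaussian (via characteristic functions), this forces \((G^{\lambda_n})_{\#}\psource\weak\ptarget\), so by weak continuity of the second marginal \(\mu\in\Pi(\psource,\ptarget)\). Finally, by weak lower semicontinuity of \(\mu\mapsto\int\|\x-\y\|^2\rmd\mu\) and the energy estimate, \(\int\|\x-\y\|^2\rmd\mu\le\liminf_n\int\|\x-\y\|^2\rmd\mu^{\lambda_n}\le C^*\); but every plan in \(\Pi(\psource,\ptarget)\) has cost at least the Kantorovich value, which equals \(C^*\) because a Monge map exists. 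Hence \(\mu\) is an optimal plan, and by Brenier's theorem — under the stated regularity, \(\psource\) absolutely continuous — it is the unique one, namely \(\mu=(\mathrm{id},G^*)_{\#}\psource\).

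Since every subsequential weak limit of \(\{\mu^{\lambda_n}\}\) is this same plan and the family is tight, \(\mu^\lambda\weak(\mathrm{id},G^*)_{\#}\psource\) as \(\lambda\to 0\). To convert this into convergence of the \emph{maps} in \(\psource\)-probability, I would use that \(G^*\) is continuous (again part of the ``mild regularity'', cf.\ the discussion preceding the theorem): then \((\x,\y)\mapsto\min(\|\y-G^*(\x)\|^2,M)\) is bounded and continuous, so testing the weak convergence against it gives \(\E_{\psource(\x)}\min(\|G^\lambda(\x)-G^*(\x)\|^2,M)\to 0\) for every \(M>0\); taking \(M=\eps^2\) and applying Markov's inequality yields \(\psource(\|G^\lambda-G^*\|\ge\eps)\to 0\), which is exactly \(G^\lambda\xrightarrow{\psource}G^*\).

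The main obstacle is the identification of the second marginal, i.e.\ upgrading \(\loss(G^\lambda)\to 0\) — a noise-scale average of relative entropies between Gaussian-smoothed measures — into weak convergence of the \emph{un-smoothed} pushforwards to \(\ptarget\); this is where the interplay of the \(L^1\)-in-\(t\) extraction, Pinsker, the second-moment tightness, and injectivity of Gaussian convolution is really used, and where the ``mild regularity conditions'' (finite second moments of \(\psource\) and \(\ptarget\), a weight \(\klweight\) positive on a set of positive measure, the diffusion model representing \(\ptarget_t\) exactly, existence of the minimizers \(G^\lambda\), and continuity/uniqueness of the Brenier map \(G^*\)) enter. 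A secondary subtlety is bookkeeping: each stage only produces subsequential statements, so the final claim needs the standard ``every subsequence has a sub-subsequence converging to the same limit'' argument to pass to the full family.
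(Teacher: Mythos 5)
Your argument is correct and reaches the paper's conclusion, but it takes a genuinely different route in two places, and it omits one ingredient the paper supplies. Both proofs are penalty/$\Gamma$-convergence arguments working with the plans $(\mathrm{id},G^\lambda)_{\#}\psource$ and ending with the same weak-convergence-to-probability-convergence step via continuity of $G^*$ and a truncated test function. The differences: (i) where you derive the twin energy estimates $\loss(G^\lambda)\le\lambda C^*$ and $\E_{\psource}\|\x-G^\lambda(\x)\|^2\le C^*$ directly by testing against $G^*$, the paper instead rescales to $\loss^\alpha=\alpha\cdot(\text{KL part})+(\text{cost})$ and exploits \emph{monotonicity} of $\loss^\alpha$ in $\alpha$, sandwiching $\liminf_k\loss^{\alpha_{n_k}}(\pi^{\alpha_{n_k}})$ between $\loss^\infty(\hat\pi)$ and $\loss^\infty(\pi^\infty)$; your direct comparison with $G^*$ is shorter and avoids the monotone-limit bookkeeping. (ii) To identify the limiting $\y$-marginal as $\ptarget$, the paper lets the inequality $\loss^\infty(\hat\pi)\le\loss^\infty(\pi^\infty)<\infty$ force $\hat\pi_{\y}=\ptarget$ implicitly (since $\loss^\infty=+\infty$ otherwise), and the heavy lifting is the lower semicontinuity of the noise-averaged KL term, proved with Fatou plus weak lsc of KL and weak continuity of convolution. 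You instead argue concretely by $L^1$-in-$t$ extraction of an a.e.-convergent subsequence, Pinsker at one good $t$, and injectivity of Gaussian convolution on characteristic functions to strip the noise; this is more hands-on and buys you a weaker moment hypothesis (finite second moment of $\psource$ rather than bounded support), whereas the paper's tightness lemma uses a Donsker--Varadhan estimate on $\E\|\y\|^2$ that leans on its bounded-support Assumption. The one thing you do not supply that the paper does is existence of the minimizer $G^\lambda$ itself; you reasonably fold it into the ``mild regularity conditions'' given the theorem's informal phrasing, but the paper derives it from the same tightness and lsc machinery, so strictly speaking the paper's version is self-contained where yours is conditional on existence. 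Your subsequence bookkeeping (subsequence of a subsequence, all limits agree, hence full convergence) is acknowledged and fine.
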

The detailed proof can be found in Appendix~\ref{sec:theory}. Informally, it means that the optimal transport map can be approximated by the RDMD generator, trained on Equation~\ref{eq:rdmd_training}, given a sufficiently small regularization coefficient, enough capacity of the architecture, and convergence of the optimization algorithm.

This result is important to examine from another angle. It is ideologically similar to the $L_2$ regularization for over-parameterized least squares regression. The original least squares, in this case, have a manifold of solutions. At the same time, by adding $L_2$ weight penalty and taking the limit as the regularization coefficient goes to zero, one obtains a solution with the least norm based on the Moore-Penrose pseudo-inverse~\cite{moore1920reciprocal, penrose1955generalized}. In our case, numerous maps may be optimal in the original DMD procedure, since it only requires matching the distribution at output. However, taking the limit when $\lambda \rightarrow 0$, one obtains a feasible solution with the least transport cost.

We demonstrate this effect on a toy problem of translating $\N(0, I)$ to $\N(0, \sigma^2 I)$ and consider linear generator $G(\x) = A \x$. The solution to the optimal transport problem with the quadratic cost $c(\x, \y) = \| \x - \y \|^2$ is $A = \sigma I$. For the DMD optimization problem without regularization, minima are obtained at the manifold of orthogonal matrices multiplied by $\sigma$: $AA^\top = \sigma^2I$. However, if one adds the regularization, the minimum compresses into one point at the cost of introducing the bias relatively to the true OT map. We illustrate this by comparing the loss surface with and without regularization in Figure~\ref{fig:surface}.

\begin{figure*}[!t]
\centering
\includegraphics[width=0.9\linewidth]{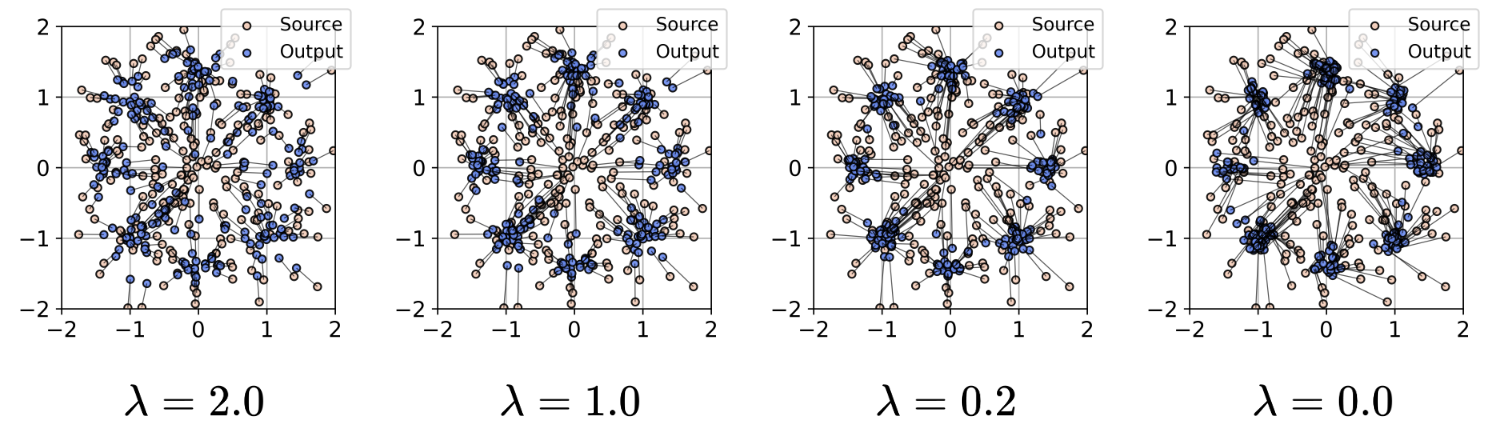}
\caption{Visualization of RDMD mappings on $Gaussian \rightarrow 8 Gaussians$ with different choices of the regularization coefficient $\lambda$.}
\label{fig:8gaussian_mappings}
\end{figure*}

\section{Related work}
\label{sec:related}
In this section, we give an overview of the existing methods for solving unpaired I2I including GANs, diffusion-based methods, and methods based on optimal transport.

\textbf{GANs} were the prevalent paradigm in the unpaired I2I for a long time. Among other methods, CycleGAN~\cite{zhu2017unpaired} and the concurrent DualGAN~\cite{yi2017dualgan}, DiscoGAN~\cite{kim2017learning} utilized the cycle-consistency paradigm, consisting in training the transfer network along with its inverse and optimizing the consistency term along with the adversarial loss. It gave rise to the whole family of two-sided methods, including UNIT~\cite{liu2017unsupervised} and MUNIT~\cite{huang2018multimodal} that divide the encoding into style-space and content-space and SCAN~\cite{li2018unsupervised} that splits the procedure into coarse and fine stages. 

The one-side GAN-based methods aim to train I2I without learning the inverse for better computational efficiency. DistanceGAN~\cite{benaim2017one} achieves it by learning to preserve the distance between pairs of samples, GCGAN~\cite{fu2019geometry} imposes geometrical consistency constraints, and CUT~\cite{park2020contrastive} uses the contrastive loss to maximize the patch-wise mutual information between input and output.

\textbf{Diffusion-based} I2I models mostly build on modifying the diffusion process using the source image. SDEdit~\cite{meng2021sdedit} initializes the reverse diffusion process for target distribution with the noisy source picture instead of the pure noise to maintain similarity. Many methods guide~\cite{ho2022classifier, epstein2023diffusion} the target diffusion process. ILVR~\cite{choi2021ilvr} adds the correction that enforces the current noisy sample to resemble the source. EGSDE~\cite{zhao2022egsde} trains a classifier between domains and encourages dissimilarity between the embeddings, corresponding to the source image and the current diffusion process state. At the same time, it enforces a small distance between their downsampled versions, which allows for a balance between faithfulness and realism. The other diffusion-based approaches include two-sided methods based on the concatenation of two diffusion models with deterministic sampling~\cite{su2022dual, wu2023latent}. 

\textbf{Optimal transport}~\cite{villani2009optimal, peyre2019computational} is another useful framework for the unpaired I2I. Methods based on it usually reformulate the OT problem (Equation~\ref{eq:monge_ot_problem}) and its modifications as Entropic OT (EOT) ~\cite{cuturi2013sinkhorn} or Schrödinger Bridge (SB) ~\cite{follmer1988random} to be accessible in practice. In particular, NOT~\cite{korotin2022neural}, ENOT~\cite{gushchin2024entropic}, and NSB~\cite{kim2023unpaired} use the Lagrangian multipliers formulation of the distribution matching constraint, which results in simulation-based adversarial training. The other methods obtain (partially) simulation-free techniques by iteratively refining the stochastic process between two distributions. In the works~\cite{de2021diffusion, vargas2021solving} refinement consists of learning the time-reversal with the corresponding initial distribution (source or target). The newer methods are based on Flow Matching~\cite{lipman2022flow, tong2023improving, albergo2022building} and the corresponding Rectification~\cite{liu2022flow, shi2024diffusion, liu2023instaflow} procedure. While being theoretically sound, most of these methods work well for smaller dimensions~\cite{korotin2023light} but suffer from computationally hard training in large-scale scenarios.

\section{Experiments}
\label{sec:exp}
This section presents the experimental results on 2 unpaired translation tasks. Section~\ref{subsec:toy_exps} is devoted to the toy 2D experiment. In Section~\ref{subsec:c2w} we compare our method with the diffusion-based baselines on the translation problem between cats and wild animals from the AFHQv2 data set~\cite{choi2020stargan}.

In all the experiments, we use the forward diffusion process with variance $\sigma_t = t$ and $\Tmax=80.0$ as in the paper~\cite{karras2022elucidating}. We parameterize all the diffusion models with the denoiser networks $D_\sigma(\x)$, conditioned on the noise level $\sigma$, and optimize Equation~\ref{eq:dsm_loss_mean} to train the target diffusion model. As for the RDMD procedure, we optimize Equation\ref{eq:rdmd_training}, where the gradient with respect to the generator parameters is calculated analogously to Equation~\ref{eq:dmd_gradient}. The transport cost $c(\x, \y)$ is chosen as the squared difference norm $\|\x - \y\|^2$. The average transport cost, reported in the figures, is calculated as the square root of the MSE between all input and output images for the sake of interpretability.

We use the same architecture for all networks: target score, fake score, and generator. We utilize the pre-trained target score in two ways. First, we initialize the fake model with its copy. Second, we initialize the generator $G_\theta(\x)$ with the same copy $\Dreal_{\sigma}(\x)$, but with a fixed $\sigma \in [0, T]$ (since the generator is one-step). The denoiser parameterization is trained to predict the target domain's clean images, therefore, such initialization should significantly speed up convergence and nudge the model to utilize the information about the target domain more efficiently~\cite{nguyen2023swiftbrush, yin2023one}. We explore the initialization of $\sigma$ for I2I in Appendix~\ref{sec:sigma_ablation}. The additional training details can be found in Appendix~\ref{sec:exp_details}.
\subsection{Toy Experiment}
\label{subsec:toy_exps}
We validate the qualitative properties of the RDMD method on 2-dimensional \textit{Gaussian $\rightarrow$ 8Gaussians}. In this setting, we explore the effect of varying the regularization coefficient $\lambda$ on the trained transport map $\map$. In particular, we study its impact on the transport cost and fitness to the target distribution $\ptarget$. 

In the experiment, both source $\N(0, I)$ and the target mixture of 8 Gaussians are represented with 5000 independent samples. We use the same small MLP-based architecture~\cite{shi2024diffusion} for all the networks.

The main results are presented in Figure \ref{fig:8gaussian_mappings}. The standard DMD ($\lambda=0.0$) learns a transport map with several intersections when demonstrated as the set of lines between the inputs and the outputs. This observation means that the learned map is not OT, because it is not cycle-monotone~\cite{mccann1995existence}. Increasing $\lambda$ yields fewer intersections, which can be used as a proxy evidence of optimality. At the same time, the generator output distribution becomes farther and farther from the desired target. The results show the importance of choosing the appropriate $\lambda$ to obtain a better trade-off between the two properties. Here, the regularization coefficient $\lambda = 0.2$ offers a good trade-off by having small intersections and producing output distribution close to the target.

% \begin{figure}
% \centering
% \includegraphics[width=0.9\linewidth]{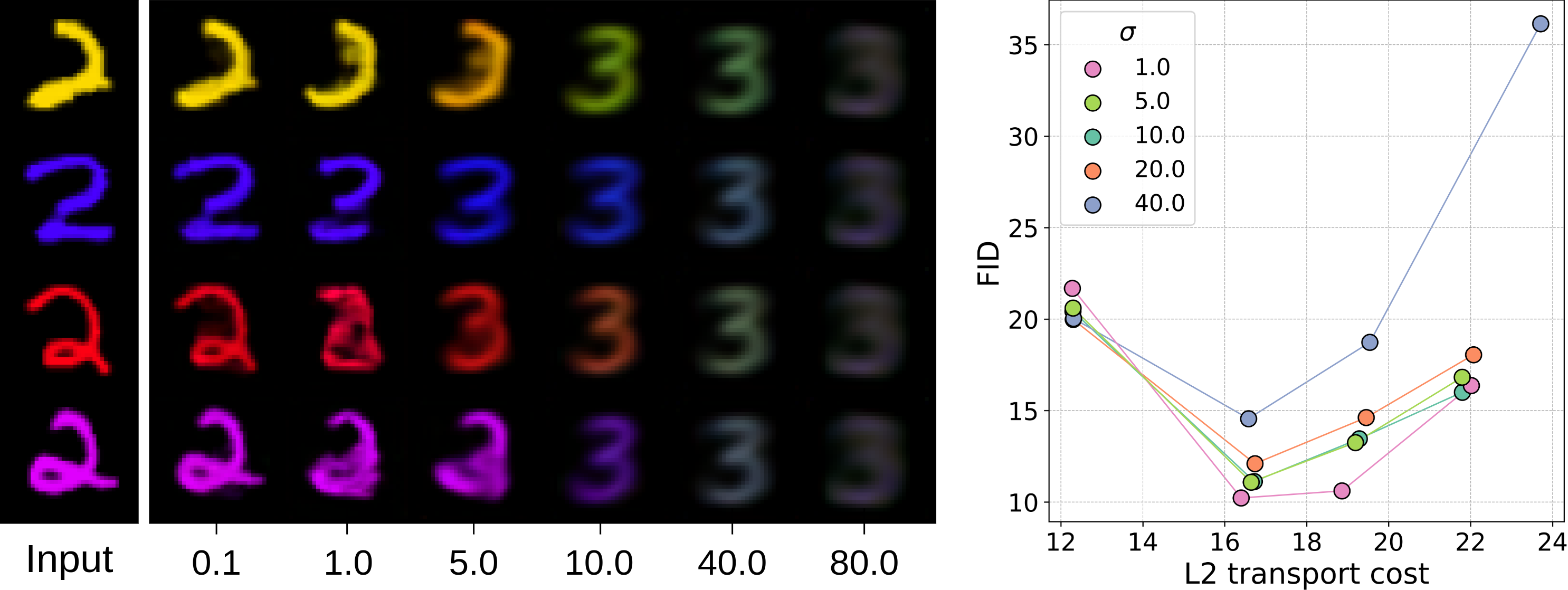}
% \caption{Left: visualization of the generator initialization at various $\sigma~\in~[0.1, 80.0]$, where $\sigma$ is the noise level parameter residual from the pre-trained diffusion architecture. Right: comparison of different $\sigma$ in terms of the quality-faithfulness trade-off. The metrics are obtained by initializing the generator at the corresponding $\sigma$ level and training it with the RDMD procedure. Here, $\lambda \in \{0, 1.0, 2.0, 4.0\}$. Higher $\lambda$ corresponds to the lower transport cost values.}
% \label{fig:mnist_sigma}
% \end{figure}

\begin{figure}[!t]
\centering
\includegraphics[width=0.8\columnwidth]{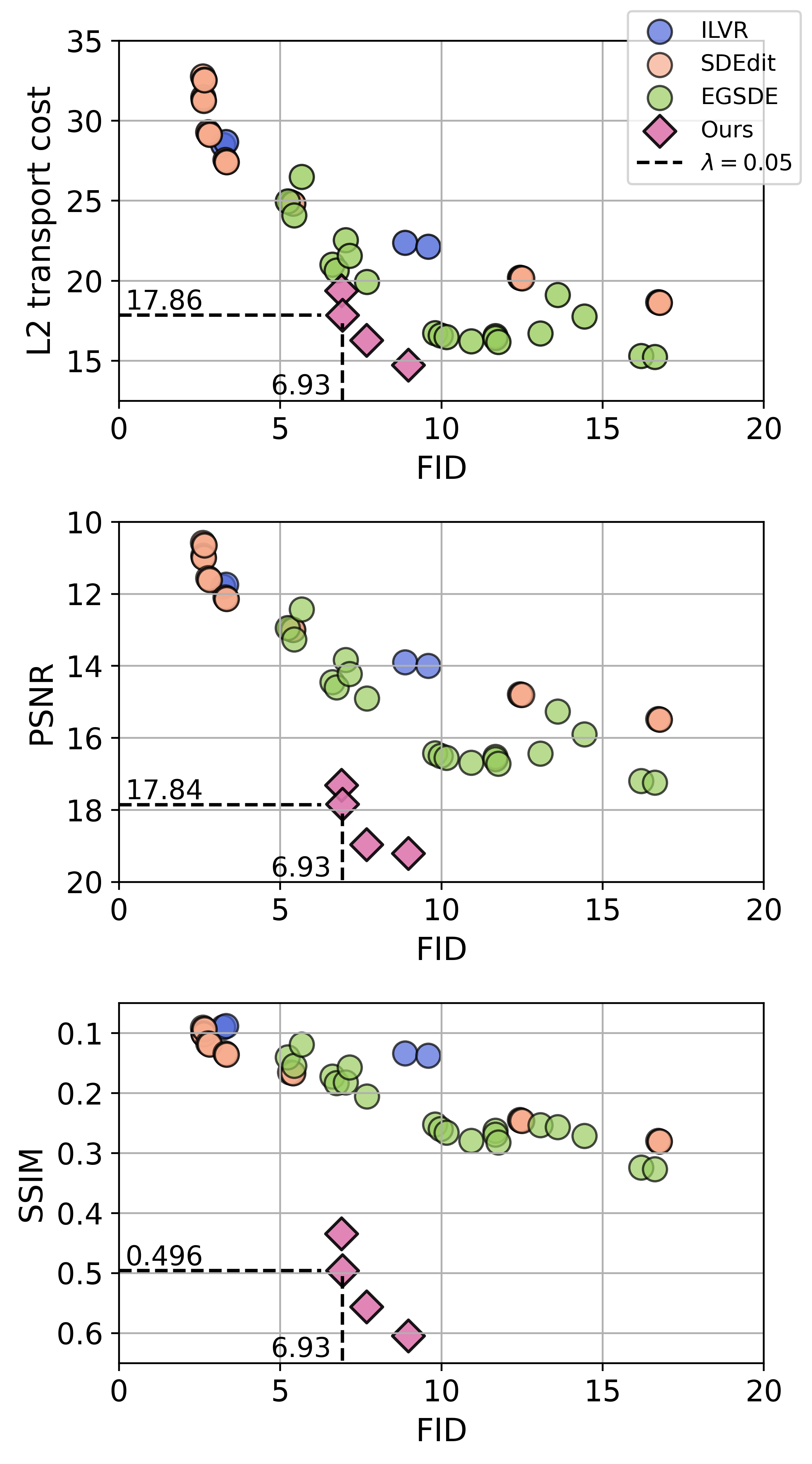}
\caption{Comparison of RDMD with diffusion-based baselines. The figure demonstrates the tradeoff between generation quality (FID$\downarrow$) and the difference between the input and output (L2$\downarrow$, PSNR$\uparrow$, SSIM$\uparrow$). RDMD gives an overall better tradeoff given fairly strict requirements on the transport cost. In the cases of PSNR and SSIM, the $y$-axis is swapped for the sake of identical readability with the first plot (left is better, low is better).
}
\label{fig:c2w_metrics}
\end{figure}
\subsection{Cat to Wild}
\label{subsec:c2w}
Finally, we compare the proposed RDMD method with the diffusion-based baselines ILVR~\cite{choi2021ilvr}, SDEdit~\cite{meng2021sdedit}, and EGSDE~\cite{zhao2022egsde} on the $64 \times 64$ $Cat \rightarrow Wild$ translation problem, based on the AFHQv2~\cite{choi2020stargan} data set. Comparison with the diffusion-based models makes the setting fair since it allows to utilize the same pre-trained target diffusion model for all of the methods. We stress, however, that the GAN-based methods mostly demonstrate results inferior to EGSDE~\cite{zhao2022egsde} in terms of FID and PSNR at the same data set with resolution $256 \times 256$.

\begin{figure}[!t]
\centering
\includegraphics[width=\columnwidth]{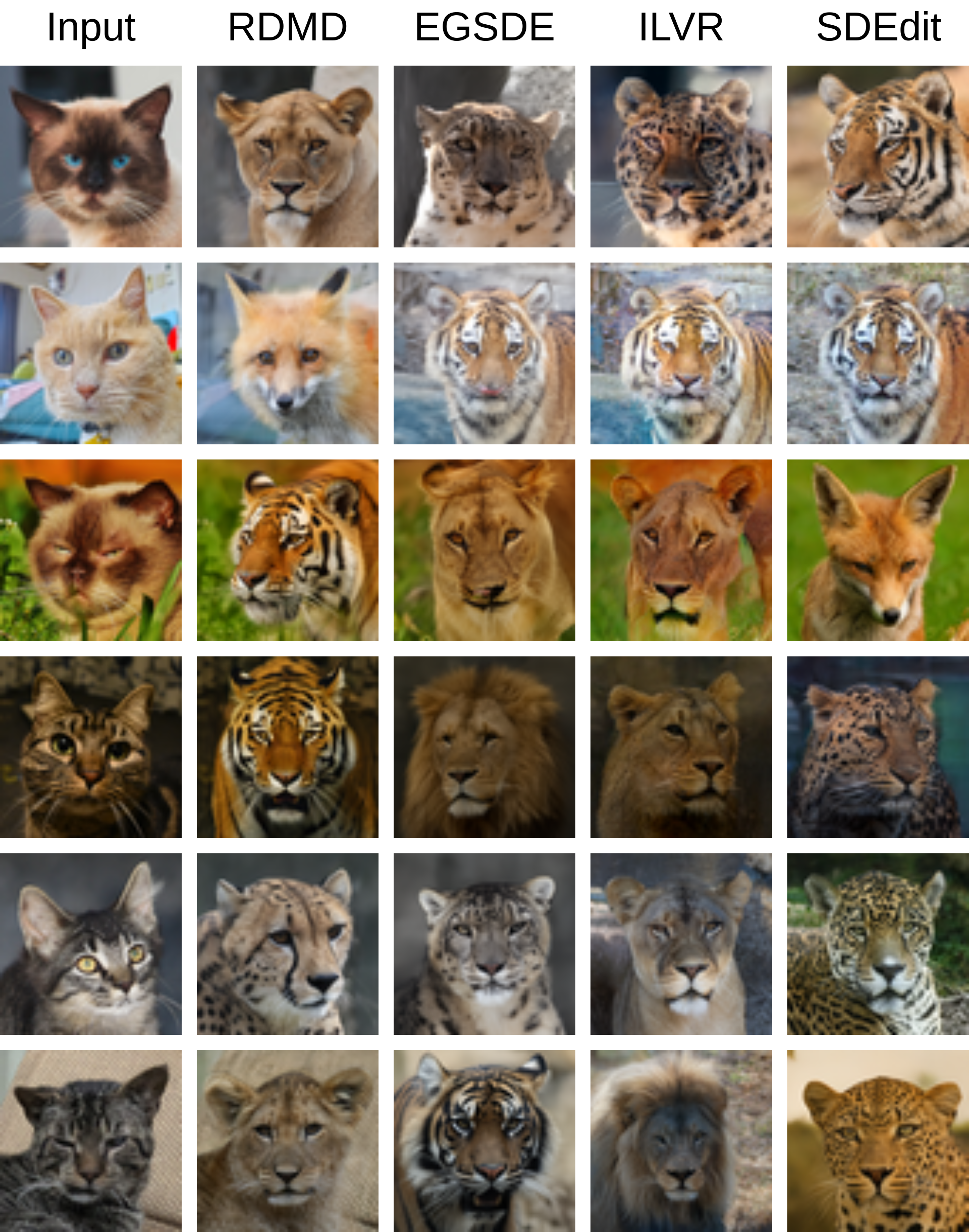}
\caption{Visual comparison of RDMD with diffusion-based baselines.
}
\label{fig:c2w_comparison}
\end{figure}

We pre-train the target diffusion model using the EDM~\cite{karras2022elucidating} architecture with the hyperparameters used by the authors on the AFHQv2 data set. Our pre-trained model achieves FID equal to 2.0. We initialize the model with $\sigma=1.0$ based on the observations from Section~\ref{sec:sigma_ablation} and train 5 RDMD generators, corresponding to the regularization coefficients $\{0.001, 0.02, 0.05, 0.1, 0.2\}$. We slightly adapt the official baseline implementations for compatibility with the EDM setting. The EGSDE classifier is trained analogous to the paper: it is initialized from the Dhariwal UNet~\cite{dhariwal2021diffusion}, pre-trained on the ImageNet~\cite{deng2009imagenet} $64 \times 64$. For each of the baselines, we run a grid of hyperparameters. The detailed hyperparameter values can be found in Appendix~\ref{subsec:c2w_details}.

We report our main quantitative results in Figure~\ref{fig:c2w_comparison} and compare the achieved faithfulness-quality trade-off with the baselines. The quality metric is FID, the faithfulness metrics are L2/PSNR/SSIM. Among these metrics, L2 is the least convenient for our method. Nevertheless, RDMD achieves a better trade-off given at least moderately strict requirements on the transport cost: all of our models beat all the baselines in the L2 range between $12.5$ and $20.0$. In all cases, our model achieves strictly higher SSIM and almost strictly higher PSNR. We note, however, that if the lower FID is preferable over the transport cost (L2 values around $22.5-27.5$), then it might be better to use one of the baselines. An example of a map with a high OT cost (25.0) and low FID (5.4) is SDEdit on Figure~\ref{fig:c2w_comparison}.

Finally, we present a visual comparison between the methods. To this end, we randomly choose 6 pictures from the test data set and report the corresponding outputs in Figure~\ref{fig:c2w_comparison}. Here, we take RDMD with $\lambda=0.05$ that achieves (FID, L2) equal to $(6.93, 17.86)$. As for the baselines, we choose the hyperparameters (Appendix~\ref{subsec:c2w_details}) with the closest FID to the RDMD: $(8.87, 22.0)$ for ILVR, $(5.4, 25.0)$ for SDEdit, and $(7.02, 22.35)$ for EGSDE.

\section{Discussion and limitations}
\label{sec:discuss}
In this paper, we propose RDMD, the novel \emph{one-step} diffusion-based algorithm for the unpaired I2I task. This algorithm is a modification of the DMD method for diffusion distillation. The main novelty is the introduction of the transport cost regularization between the input and the output of the model, which allows to control the trade-off between faithfulness and visual quality.

From the theoretical standpoint, we prove that at low regularization coefficients, the theoretical optimum of the introduced objective is close to the optimal transport map (Thm.~\ref{thm1}). Our experiments in Sec.~\ref{subsec:toy_exps} demonstrate how the choice of regularization coefficient affects the trained mapping and allows us to build the general intuition. In Sec.~\ref{subsec:c2w} we compare our method with the diffusion-based baselines (ILVR, SDEdit, EGSDE) and obtain better results given fair restrictions on the transport cost. The results are strictly better than all of the baselines in terms of SSIM and almost strictly superior to all of the baselines in terms of PSNR.

In terms of limitations, we admit that our theory works in the asymptotic regime, while one could derive more precise non-limit bounds. Our experimental results on $Cat \rightarrow Wild$ demonstrate the lowest FID around $6.9$, while the pre-trained diffusion model has $2.01$. Improving the visual quality and testing our method on high dimensions is important for future work. Furthermore, the desired feature of the method would be switching among different reg. coefficients without re-training.

% Acknowledgements should only appear in the accepted version.
\section*{Acknowledgements}

The work of Denis Rakitin was supported by the grant for research centers in the field of AI provided by the Analytical Center for the Government of the Russian Federation (ACRF) in accordance with the agreement on the provision of subsidies (identifier of the agreement 000000D730321P5Q0002) and the agreement with HSE University No. 70-2021-00139. This research was supported in part through computational resources of HPC facilities at HSE University~\cite{kostenetskiy2021hpc}.

\bibliography{references}
\bibliographystyle{icml2024}

%%%%%%%%%%%%%%%%%%%%%%%%%%%%%%%%%%%%%%%%%%%%%%%%%%%%%%%%%%%%%%%%%%%%%%%%%%%%%%%
%%%%%%%%%%%%%%%%%%%%%%%%%%%%%%%%%%%%%%%%%%%%%%%%%%%%%%%%%%%%%%%%%%%%%%%%%%%%%%%
% APPENDIX
%%%%%%%%%%%%%%%%%%%%%%%%%%%%%%%%%%%%%%%%%%%%%%%%%%%%%%%%%%%%%%%%%%%%%%%%%%%%%%%
%%%%%%%%%%%%%%%%%%%%%%%%%%%%%%%%%%%%%%%%%%%%%%%%%%%%%%%%%%%%%%%%%%%%%%%%%%%%%%%
\newpage
\appendix
\onecolumn
\section{Theory}
\label{sec:theory}
In this section, we aim at proving the main theoretical result of the work: solution of the soft-constrained RDMD objective converges to the solution of the hard-constrained Monge problem. Our proof is largely based on the work~\cite{liero2018optimal}. It introduces the family of entropy-transport problems, consisting in optimizing the transport cost with soft constraints based on the divergence between the map's output distribution and the target. There are, however, differences between the problems, that prevent us from reducing the functional in Equation~\ref{eq:rdmd_loss} to the entropy-transport problems. First, authors consider the case of finite non-negative measures, while we stick to the probability distributions. Second, the family of Csisz{\'a}r $f$-divergences~\cite{csiszar1967information}, used in~\cite{liero2018optimal}, seemingly does not contain the integral ensemble of KL divergences, used in Equation~\ref{eq:rdmd_loss}. Finally, we illustrate the proof in a simpler particular setting for the narrative purposes. Nevertheless, the used ideas are very similar.

\subsection{Proof outline}
\label{subsec:proof_outline}
We start by giving a simple outline of the proof. Given a pair of source and target distributions $\psource$ and $\ptarget$, RDMD optimizes the following functional with respect to the generator $G$:
\begin{equation}
    \int\limits_{0}^{\Tmax} \klweight \KL\left(\pgen_t \,\|\, \ptarget_t\right) \rmd t + \lambda\,\E_{\psource(\x)} c\left(\x, G(\x)\right),
\end{equation}
where $\pgen_t$ and $\ptarget_t$ are the generator distribution $\pgen$ and the target distribution $\ptarget$, perturbed by the forward diffusion process up to the time step $t$. Our goal is to prove that the optimal generator of the regularized objective converges to the optimal transport map when $\lambda \rightarrow 0$. With a slight abuse of notation, in this section we will use a different objective
\begin{equation}
\label{eq:rdmd_alpha}
    \loss^\alpha(G) = \alpha\, \int\limits_{0}^{\Tmax} \klweight \KL\left(\pgen_t \,\|\, \ptarget_t\right) \rmd t + \,\E_{\psource(\x)} c\left(\x, G(\x)\right)
\end{equation}
and consider the equivalent limit $\alpha \rightarrow + \infty$. We also define 
\begin{equation}
    \loss^\infty(G) = \begin{dcases}
        \E_{\psource(\x)} c\left(\x, G(\x)\right), \text{ if }\,\pgen = \ptarget; \\
        + \infty, \text{ else}
    \end{dcases}
\end{equation}
to be the objective, corresponding to the unconditional formulation of the Monge problem (Equation~\ref{eq:monge_ot_problem}). In this section, we will denote minimum of this objective (which is, therefore, the optimal transport map) as $G^\infty$~\footnote{Solution to the Monge problem is not always unique, but we will further impose assumptions that will guarantee the uniqueness.}

We first assume that the infimum of the objective $\loss^\alpha$ is reached and define $G^\alpha$ be the optimal generator. We denote by $\{\alpha_n\}_{n = 1}^{+ \infty}$ an arbitrary sequence with $\alpha_n \rightarrow +\infty$. We first make two informal assumptions that need to be proved (and will be in some sence further in the section):
\begin{enumerate}
    \item The sequence $G^{\alpha_n}$ converges (in some sence) to some function $\hat{G}$;
    \item $\loss^\alpha$ is continuous with respect to this convergence, i.e. for every convergent sequence $G_n \rightarrow G$ holds $\loss^\alpha(G_n) \rightarrow \loss^\alpha(G)$.
\end{enumerate}

Given this, we first observe that for each map $G$ the sequence of objectives $\loss^{\alpha_n}(G)$ monotonically converges to the objective $\loss^\infty(G)$. It follows from the fact that the first summand of $\loss^{\alpha_n}$ converges to $+\infty$ if and only if the KL divergence is non-zero, which is equivalent to saying that $\pgen$ and $\ptarget$ differ~\cite{wang2024prolificdreamer}. If instead $\pgen = \ptarget$, the summand zeroes out. This also means that the minimal values of the corresponding objectives form a monotonic sequence:
\begin{equation}
    \loss^{\alpha_n}(G^{\alpha_n}) \leq \loss^{\alpha_{n + 1}}(G^{\alpha_{n + 1}}) \leq \loss^{\infty}(G^\infty).
\end{equation}

Finally, the monotonicity implies that for a fixed $m$
\begin{equation}
    \lim\limits_{n \rightarrow \infty} \loss^{\alpha_n}(G^{\alpha_n}) \geq \lim\limits_{n \rightarrow \infty} \loss^{\alpha_m}(G^{\alpha_n}),
\end{equation}
since the input $G^{\alpha_n}$ is fixed and $\mathcal{L}^{\alpha_n}$ monotonically increases. Using the assumed continuity of the objective, we obtain
\begin{equation}
    \lim\limits_{n \rightarrow \infty} \loss^{\alpha_n}(G^{\alpha_n}) \geq \loss^{\alpha_m}(\hat{G})
\end{equation}
for each $m$. Taking the limit $m \rightarrow \infty$, we obtain
\begin{equation}
    \lim\limits_{n \rightarrow \infty} \loss^{\alpha_n}(G^{\alpha_n}) \geq \loss^\infty(\hat{G}).
\end{equation}
Combining this set of equations, we obtain:
\begin{equation}
    \loss^\infty(G^\infty) \geq \lim\limits_{n \rightarrow \infty} \loss^{\alpha_n}(G^{\alpha_n}) \geq \loss^\infty(\hat{G}) \geq \loss^\infty(G^\infty),
\end{equation}
where the first inequality comes from the monotonicity of the minimal values and the last inequality uses that $G^\infty$ is the minimum of the objective $\loss^\infty$. Hence, that limiting map $\hat{G}$ achieves minimal value of the objective $\loss^\infty$ and is, therefore, the optimal transport map.

At this point, we only need to define and prove some versions of the aforementioned facts:
\begin{enumerate}
    \item Infimum of $\loss^{\alpha}$ is reached;
    \item The sequence of minima $G^{\alpha_n}$ converges;
    \item $\loss^{\alpha}$ is continuous with respect to this convergence.
\end{enumerate}

From now on, we formulate the result in details and stick to the formal proof.

\subsection{Assumptions and theorem statement}

First, we list the assumptions.

\begin{assumption}
\label{A1}
    The distributions $\psource$ and $\ptarget$ have densities with respect to the Lebesgue measure. The distributions are defined on open bounded subsets $\mathcal{X} \subset \Rd$ and $\mathcal{Y} \subset \Rd$, where $\mathcal{Y}$ is convex. The densities are bounded away from zero and infinity on $\mathcal{X}$ and $\mathcal{Y}$, respectively.
\end{assumption}

We admit that boundedness of the support is a very restrictive assumption from the theoretical standpoint, however in our applications (I2I) both source and target distributions are supported on the bounded space of images. We thus can set $\mathcal{X} = \mathcal{Y} = (0, 1)^d$.

\begin{assumption}
\label{A2}
    The cost $c(\x, \y)$ is quadratic $\|\x - \y\|^2$.
\end{assumption}

Here, we stick to proving the theorem only for $L_2$ cost due to difficulties in investigation of Monge map existence and regularity for general transport costs~\cite{de2014monge}.
\begin{assumption}
\label{A3}
    The weighting function $\klweight$ is positive and bounded.
\end{assumption}

\begin{assumption}
\label{A4}
    Standard deviation $\sigma_t$ of the noise, defined by the forward process, is continuous in $t$.
\end{assumption}
 
\setcounter{theorem}{0}
\begin{theorem}
\label{thm1_app}
    Let $\psource$, $\ptarget, c\,, \klweight,$ and $\sigma_t$ satisfy the assumptions \textbf{1-3}. Then, there exists a minimum $G^\alpha$ of the objective $\loss^\alpha$ from the Equation~\ref{eq:rdmd_alpha}. If $\alpha_n \rightarrow \infty$, the sequence $G^{\alpha_n}$ converges in probability (with respect to the source distribution) to the optimal transport map $G^\infty$: 
\begin{equation}
    G^{\alpha_n} \xrightarrow[n \rightarrow \infty]{\psource} G^\infty.
\end{equation}
\end{theorem}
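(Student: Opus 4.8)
The plan is to follow the Γ-convergence strategy sketched in the proof outline, but to execute the three missing technical pieces rigorously. The central object is the family of functionals $\loss^\alpha$ on a suitable space of maps $G:\mathcal{X}\to\mathcal{Y}$. Because $\mathcal{Y}$ is bounded, every admissible $G$ lies in $L^2(\psource;\mathcal{Y})$ with a uniform bound, so the natural topology in which to seek compactness is weak convergence in $L^2(\psource)$ (equivalently, narrow convergence of the associated plans $(\mathrm{id},G)_\#\psource$ on $\mathcal{X}\times\mathcal{Y}$, which I would use as the working notion since it linearizes the transport cost). First I would establish existence of a minimizer $G^\alpha$ for fixed $\alpha$: take a minimizing sequence, extract (via boundedness of $\mathcal{Y}$ and Prokhorov) a narrowly convergent subsequence of plans, and show the limit plan is induced by a map — here I would invoke Assumption~\ref{A1} (the source has a density bounded away from zero and the quadratic cost is strictly convex), so that an optimal plan for the regularized problem must be deterministic by the standard Brenier-type argument, or alternatively restrict a priori to deterministic plans and use that the KL term only depends on $\pgen_t$. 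Then lower semicontinuity of $\loss^\alpha$ closes the existence argument: the transport term $\E c(\x,G(\x))$ is continuous under narrow convergence of plans (bounded continuous integrand on a compact set), and each $\KL(\pgen_t\|\ptarget_t)$ is weakly lower semicontinuous in $\pgen$, hence in $\pgen_t$ by continuity of the pushforward under the Gaussian smoothing $\pgen\mapsto\pgen_t = \pgen * \N(0,\sigma_t^2 I)$ (using Assumption~\ref{A4} to handle the $t$-integral via Fatou).

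Next I would prove the compactness claim: the sequence $\{G^{\alpha_n}\}$ has a narrowly convergent subsequence of plans with limit $\hat\gamma$, again by Prokhorov since all plans are supported in the fixed compact $\overline{\mathcal{X}}\times\overline{\mathcal{Y}}$. The delicate point is upgrading $\hat\gamma$ to a map and identifying it with $G^\infty$. For this I would run the energy-comparison chain exactly as in the outline: since $G^\infty$ (the Brenier map between $\psource$ and $\ptarget$, which exists and is unique and, by Assumption~\ref{A1}–\ref{A2} plus Caffarelli regularity on the convex bounded $\mathcal{Y}$, is continuous) is admissible with $\loss^\infty(G^\infty)<\infty$, we get $\loss^{\alpha_n}(G^{\alpha_n})\le \loss^\infty(G^\infty)$ uniformly; therefore $\alpha_n\int\klweight\,\KL(\pgen_t^{\alpha_n}\|\ptarget_t)\,dt$ stays bounded, forcing $\int\klweight\,\KL(\pgen_t^{\alpha_n}\|\ptarget_t)\,dt\to 0$. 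By weak lower semicontinuity of KL and Assumption~\ref{A3} this yields $\KL(\hat{p}_t\|\ptarget_t)=0$ for a.e.\ $t$, hence $\hat p_t=\ptarget_t$ for those $t$; since Gaussian smoothing is injective on measures (its characteristic function argument), $\hat p = \ptarget$, i.e.\ the limit plan has second marginal $\ptarget$ and is therefore feasible for the Monge problem. Then $\liminf \E c(\x,G^{\alpha_n}(\x)) \ge \int c\,d\hat\gamma \ge$ (optimal transport cost) $=\loss^\infty(G^\infty)$, while the upper bound from the comparison gives the reverse, so $\hat\gamma$ is an optimal plan; uniqueness of the Brenier solution then forces $\hat\gamma = (\mathrm{id},G^\infty)_\#\psource$, and since the whole sequence of plans has the same unique limit, narrow convergence of $(\mathrm{id},G^{\alpha_n})_\#\psource$ to $(\mathrm{id},G^\infty)_\#\psource$ holds. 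Finally I would translate narrow convergence of these graph-plans into convergence in probability of $G^{\alpha_n}\to G^\infty$ with respect to $\psource$: this is a standard fact when the limit plan is induced by a map — for any $\delta>0$, the set $\{(\x,\y):\|\y-G^\infty(\x)\|\ge\delta\}$ is (using continuity of $G^\infty$) a set whose indicator can be bounded below by a continuous function, and feeding it into the narrow-convergence definition gives $\psource(\|G^{\alpha_n}-G^\infty\|\ge\delta)\to 0$.

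I expect the main obstacle to be the two places where measure-valued limits must be promoted to maps: (i) showing the regularized minimizer $G^\alpha$ genuinely exists as a \emph{map} rather than only as an optimal plan, and (ii) showing the limit $\hat\gamma$ is induced by $G^\infty$. Part (i) needs a deterministic-optimal-plan argument that is clean for pure $W_2$ but must be checked in the presence of the extra KL term; I would handle it by noting the KL term is a function of the second marginal alone, so among all plans with a fixed second marginal the transport cost is minimized by a deterministic (Brenier) plan, and then optimize over second marginals — this reduces the existence question to compactness on the space of second marginals plus lower semicontinuity of $\mu\mapsto\int\klweight\KL(\mu_t\|\ptarget_t)\,dt + W_2^2(\psource,\mu)$, which the assumptions supply. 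Part (ii) is where I lean on Caffarelli regularity (Assumptions~\ref{A1},\ref{A2}, convexity of $\mathcal{Y}$) to get a \emph{continuous} $G^\infty$, which both guarantees uniqueness of the Monge solution and makes the narrow-to-in-probability conversion routine; without regularity one would only get convergence of plans, not of maps. The monotone-limit bookkeeping in the outline ($\loss^{\alpha_n}(G^{\alpha_n})$ sandwiched between the two sides) is then essentially formal and I would state it as in the excerpt.
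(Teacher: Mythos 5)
Your proposal follows essentially the same high-level strategy as the paper's appendix proof: reformulate in terms of plans on $\mathcal{X}\times\mathcal{Y}$, establish existence and lower semicontinuity, run the monotone energy-comparison chain to show the limiting plan is optimal for Monge, and convert narrow convergence of graph-plans to convergence in probability of maps using continuity of $G^\infty$ (Caffarelli regularity). The main structural difference is in how compactness is obtained. You get tightness ``for free'' from Prokhorov because all plans are supported in the compact $\overline{\mathcal{X}}\times\overline{\mathcal{Y}}$, implicitly restricting admissible maps to $G:\mathcal{X}\to\overline{\mathcal{Y}}$. The paper does not make this restriction: it allows arbitrary measurable $G$ and instead proves a tightness lemma (Lemma~\ref{lemma:tightness}) via the Donsker--Varadhan variational formula, using the test function $g(\x)=\|\x\|^2/(2\sigma_T^2)$ to extract a uniform second-moment bound on $\pi^n_{\y}$ from the boundedness of the KL term. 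Your shortcut is cleaner but quietly changes the admissible class: it is not obvious that the unconstrained minimizer of $\loss^\alpha$ lands in $\overline{\mathcal{Y}}$ (projecting onto the convex $\overline{\mathcal{Y}}$ decreases the transport cost but does not obviously decrease the convolved KL term), so to match the theorem's statement over general $G$ you would still need something like the paper's Donsker--Varadhan bound or an a priori truncation argument.

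A second, smaller divergence: you identify the limit plan's second marginal by showing $\int\klweight\,\KL(\pi^{\alpha_n}_{\y,t}\,\|\,\ptarget_t)\,\rmd t\to 0$ and invoking lower semicontinuity of KL plus injectivity of Gaussian smoothing to get $\hat p=\ptarget$. The paper never isolates this step; it carries both terms of the loss together through the monotonicity sandwich $\loss^\infty(\pi^\infty)\ge\liminf\loss^{\alpha_{n_k}}(\pi^{\alpha_{n_k}})\ge\loss^\infty(\hat\pi)\ge\loss^\infty(\pi^\infty)$. Both routes are valid; yours makes the ``marginal becomes $\ptarget$ in the limit'' mechanism more explicit at the cost of one extra argument. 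You also correctly flag the plan-to-map upgrade in the existence step as delicate --- the paper's Lemma asserting existence of a minimizer over generator-based plans extracts a weak limit $\hat\pi$ of deterministic plans but does not verify that $\hat\pi$ is itself generator-based, so your proposed remedy (fix the second marginal, replace the plan by its Brenier map to that marginal, then optimize over marginals) is genuinely needed there and is not present in the paper's text. For the main convergence statement this gap is harmless because the paper works directly with the assumed sequence of optimal generators and only needs weak convergence of the associated plans, but your attention to it is warranted.

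Everything else --- use of Fatou to pass the $\liminf$ under the $t$-integral, Portmanteau for the transport term, Posner's lsc of KL, the test function $d(\bu,\bv)=\min(\|\bu-\bv\|/\eps,\,1)$ composed with $G^\infty$ to pass from narrow convergence of plans to convergence in probability --- matches the paper's execution.
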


\subsection{Theoretical background}
\label{subsec:weak_conv}
We start by listing all the results necessary for the proof. They are mostly related to the topics of measure theory (weak convergence, in particular) and optimal transport. Most of these classic facts can be found in the books~\cite{bogachev2007measure, dudley2018real}. Otherwise, we make the corresponding citations.
\begin{definition}
    A sequence of probability distributions $p^n(\x)$ converges weakly to the distribution $p(\x)$ if for all continuous bounded test functions $\varphi \in \Cb(\Rd)$ holds
    \begin{equation}
        \E_{p^n(\x)} \varphi(\x) \xrightarrow[n \rightarrow \infty]{} \E_{p(\x)}\varphi(\x).
    \end{equation}
    Notation: $p^n \weak p$.
\end{definition}

\begin{definition}
    A function $f : \Rd \rightarrow \R$ is called lower semi-continuous (lsc), if for all $\x_n \rightarrow \x$ holds
    \begin{equation}
        \liminf\limits_{n \rightarrow \infty} f(\x_n) \geq f(\x).
    \end{equation}
\end{definition}

\begin{theorem}[Portmanteau/Alexandrov]
\label{thm:alexandrov}
    $p^n \weak p$ is equivalent to the following statement: for every lsc function $f$, bounded from below, holds
    \begin{equation}
        \liminf\limits_{n \rightarrow \infty}\E_{p^n(\x)}f(\x) \geq \E_{p(\x)}f(\x).
    \end{equation}
\end{theorem}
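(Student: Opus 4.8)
The plan is to prove the stated equivalence as two separate implications: the routine direction, where the lsc inequality is assumed and weak convergence is deduced, and the substantive direction, where weak convergence is assumed and the lsc inequality is obtained by approximating $f$ from below by bounded continuous functions. For the routine direction, suppose the inequality $\liminf_n \E_{p^n(\x)} f(\x) \ge \E_{p(\x)} f(\x)$ holds for every lsc $f$ bounded from below, and fix any test function $\varphi \in \Cb(\Rd)$. Since $\varphi$ is continuous it is in particular lower semi-continuous, and being bounded it is bounded from below; hence the hypothesis gives $\liminf_n \E_{p^n(\x)} \varphi(\x) \ge \E_{p(\x)} \varphi(\x)$. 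Applying the same hypothesis to $-\varphi$ (again continuous, hence lsc, and bounded below) gives $\liminf_n \E_{p^n(\x)}(-\varphi(\x)) \ge \E_{p(\x)}(-\varphi(\x))$, i.e. $\limsup_n \E_{p^n(\x)} \varphi(\x) \le \E_{p(\x)} \varphi(\x)$. Combining the two forces $\lim_n \E_{p^n(\x)} \varphi(\x) = \E_{p(\x)} \varphi(\x)$, which is precisely $p^n \weak p$.

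For the converse, assume $p^n \weak p$ and let $f$ be lsc and bounded from below. Since all measures involved are probability measures, subtracting $\inf f$ shifts both $\E_{p^n} f$ and $\E_p f$ by the same constant, so we may assume $f \ge 0$. The key step is the Moreau--Yosida / inf-convolution approximation: set
\begin{equation}
    f_k(\x) = \min\!\Big(k,\ \inf_{\y \in \Rd}\big\{ f(\y) + k\,\|\x - \y\|\big\}\Big).
\end{equation}
Each $f_k$ is nonnegative, bounded by $k$, and $k$-Lipschitz (an infimum of $k$-Lipschitz functions, finite because $f \ge 0$ and the choice $\y = \x$ gives an upper bound), hence $f_k \in \Cb(\Rd)$; moreover $f_k \le f_{k+1} \le f$ pointwise, and because $f$ is lsc one has $f_k(\x) \uparrow f(\x)$ for every $\x$. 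Now for each fixed $k$, $f \ge f_k$ yields $\E_{p^n(\x)} f(\x) \ge \E_{p^n(\x)} f_k(\x)$, and taking $\liminf$ in $n$ together with weak convergence applied to the bounded continuous function $f_k$ gives
\begin{equation}
    \liminf_{n \to \infty} \E_{p^n(\x)} f(\x) \ \ge\ \liminf_{n \to \infty} \E_{p^n(\x)} f_k(\x) \ =\ \E_{p(\x)} f_k(\x).
\end{equation}
Letting $k \to \infty$, the monotone convergence theorem gives $\E_{p(\x)} f_k(\x) \uparrow \E_{p(\x)} f(\x)$, so $\liminf_n \E_{p^n(\x)} f(\x) \ge \E_{p(\x)} f(\x)$, completing the proof.

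The main obstacle — indeed the only place genuine analysis is needed — is the approximation lemma: that a nonnegative lsc function is the increasing pointwise limit of bounded Lipschitz functions, and in particular that the inf-convolutions $f_k$ climb up to $f$ itself rather than merely to some lower envelope. Verifying $f_k(\x_0) \to f(\x_0)$ requires, given $\eps > 0$, using lower semicontinuity of $f$ at $\x_0$ to pick $\delta$ with $f(\y) \ge f(\x_0) - \eps$ on the ball of radius $\delta$, then splitting the defining infimum over $\y$ into that ball and its complement, on which $f \ge 0$ and $k\,\|\x_0 - \y\| \ge k\delta \to \infty$ dominates; the outer truncation at level $k$ is harmless since it only interferes once $k$ exceeds the finite value $f(\x_0)$ (and is consistent with $f(\x_0) = +\infty$ as well). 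Everything else — the $\pm\varphi$ trick, monotonicity of $f_k$ in $k$, and the monotone convergence passage to the limit — is routine.
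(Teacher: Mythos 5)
Your proof is correct. The paper states this Portmanteau-type result as background and does not prove it, deferring to standard references; your argument is the classical one those references give: the $\pm\varphi$ trick for the easy implication, and for the substantive implication the reduction to $f \ge 0$ followed by the Moreau--Yosida inf-convolution $f_k(\x) = \min\bigl(k, \inf_{\y}\{f(\y) + k\|\x-\y\|\}\bigr)$, which is bounded, $k$-Lipschitz, and increases pointwise to $f$ by lower semi-continuity, so that weak convergence applied to each $f_k$ plus monotone convergence yields the $\liminf$ inequality. All the details you flag (finiteness of the infimum via $\y = \x$, the split of the infimum into the $\delta$-ball and its complement, and the harmlessness of the truncation at level $k$) are exactly the points that need checking, and you check them correctly.
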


\begin{definition}
    A sequence of probability measures $p^n$ is called relatively compact, if for every subsequence $p^{n_k}$ there exists a weakly convergent subsequece $p^{n_{k_j}}$.
\end{definition}

\begin{definition}
    A sequence of probability measures $p^n$ is called tight, if for every $\eps > 0$ there exists a compact set $K_\eps$ such that $p^n(K_\eps) \geq 1 - \eps\,$ for all $n$.
\end{definition}

\begin{theorem}[Prokhorov]
\label{thm:prokhorov}
    A sequence of probability measures $p^n$ is relatively compact if and only if it is tight. In particular, every weakly convergent sequence is tight.
\end{theorem}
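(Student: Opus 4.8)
\textbf{Plan of proof for Theorem~\ref{thm1_app}.}
The plan is to make rigorous the three-step outline of Section~\ref{subsec:proof_outline}: (i) existence of a minimizer $G^\alpha$, (ii) compactness of the family $\{G^{\alpha_n}\}$ in a suitable topology, and (iii) a lower-semicontinuity/$\Gamma$-convergence argument that forces any limit point to minimize $\loss^\infty$, combined with uniqueness of the Brenier map to upgrade to full convergence in probability. The natural variable to work with is not the map $G$ itself but the push-forward plan: to each generator $G$ we associate the coupling $\pi_G = (\mathrm{id}, G)_\# \psource$ on $\mathcal{X}\times\mathcal{Y}$, which has first marginal $\psource$. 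Since $\mathcal{X}$ and $\mathcal{Y}$ are bounded (Assumption~\ref{A1}), $\mathcal{X}\times\mathcal{Y}$ is relatively compact, so by Prokhorov (Theorem~\ref{thm:prokhorov}) \emph{any} sequence of such couplings is tight and hence has a weakly convergent subsequence; this is where the boundedness assumption does its work and where I would route steps (i) and (ii).

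For step (i), I would reformulate $\loss^\alpha$ as a functional on couplings $\pi$ with first marginal $\psource$: the transport term $\E_{\psource}c(\x,G(\x))=\int c\,\rmd\pi$ is weakly continuous because $c$ is continuous and bounded on the compact $\overline{\mathcal X\times\mathcal Y}$; the KL term $\int_0^T \klweight \KL(\pgen_t\|\ptarget_t)\,\rmd t$ depends on $\pi$ only through its second marginal $\pgen$, and I would show it is weakly lower semicontinuous in $\pgen$. The key sub-lemma here is that $\pgen\mapsto \KL(\pgen_t\|\ptarget_t)$ is weakly lsc for each fixed $t$: convolution with the Gaussian kernel $\N(0,\sigma_t^2 I)$ is weakly continuous, and $(\mu,\nu)\mapsto\KL(\mu\|\nu)$ is jointly weakly lsc (a standard fact, e.g. via the Donsker--Varadhan variational formula, which exhibits $\KL$ as a supremum of weakly continuous affine functionals); then Fatou's lemma in $t$ (using Assumption~\ref{A3} that $\klweight$ is bounded, and Assumption~\ref{A4} for measurability/continuity in $t$ of the integrand) transfers lsc to the time integral. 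Together with tightness this gives, by the direct method, a minimizing coupling $\pi^\alpha$; one then argues $\pi^\alpha$ is supported on a graph and hence of the form $\pi_{G^\alpha}$ — this deterministic-coupling conclusion is the one genuinely delicate point of step (i), and I would obtain it exactly as in the Monge--Brenier theory, by noting that for quadratic cost an optimal plan with the constraint $\pgen=$ (its own second marginal) is concentrated on the subdifferential of a convex function, hence a graph, under Assumption~\ref{A1}.

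For step (iii), let $\pi^{\alpha_n}=\pi_{G^{\alpha_n}}$ and pass (Prokhorov again) to a subsequence $\pi^{\alpha_{n_k}}\weak\hat\pi$ with first marginal $\psource$. The monotonicity chain from the outline — $\loss^{\alpha_n}(G^{\alpha_n})\le \loss^{\alpha_{n+1}}(G^{\alpha_{n+1}})\le\loss^\infty(G^\infty)$ — gives a uniform bound $\alpha_{n}\int_0^T\klweight\KL(\pgen_t\|\ptarget_t)\,\rmd t \le \loss^\infty(G^\infty)-\E c \le C$, so the KL term at $\hat\pi$ is $0$ by weak lsc, forcing $\widehat{\pgen}_t=\ptarget_t$ for a.e.\ $t$ and, since the Gaussian convolution is injective on measures, $\widehat{\pgen}=\ptarget$; thus $\hat\pi$ is feasible for $\loss^\infty$. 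Weak continuity of $\int c\,\rmd\pi$ then yields $\int c\,\rmd\hat\pi=\lim\int c\,\rmd\pi^{\alpha_{n_k}}\le \lim\loss^{\alpha_{n_k}}(G^{\alpha_{n_k}})\le\loss^\infty(G^\infty)$, so $\hat\pi$ is optimal for the Monge problem; by uniqueness of the Brenier map (guaranteed under Assumption~\ref{A1}--\ref{A2}, McCann), $\hat\pi=\pi_{G^\infty}$. Since every subsequence has a further subsequence converging to the \emph{same} limit $\pi_{G^\infty}$, the whole sequence $\pi_{G^{\alpha_n}}\weak\pi_{G^\infty}$. Finally I would convert weak convergence of the graph-couplings into convergence in $\psource$-probability of $G^{\alpha_n}$ to $G^\infty$: test against $\varphi(\x,\y)=\psi(\x)\cdot\min(\|\y-G^\infty(\x)\|,1)$-type functions, or more cleanly use that for couplings with common first marginal, weak convergence to a deterministic (graph) coupling is equivalent to convergence in probability of the conditionals — a standard measure-theoretic lemma that I would state and cite. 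The main obstacle, I expect, is the joint weak lower semicontinuity of the time-integrated KL divergence together with the graph-concentration of the minimizer in step (i); the Prokhorov/tightness inputs themselves are immediate from boundedness of the supports, which is precisely why Assumption~\ref{A1} is imposed.
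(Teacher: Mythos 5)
The statement you were asked to address is Prokhorov's theorem (Theorem~\ref{thm:prokhorov}): a sequence of probability measures is relatively compact if and only if it is tight. Your submission does not engage with this statement at all — it is a proof plan for Theorem~\ref{thm1_app}, the paper's main convergence result, and it repeatedly \emph{invokes} Prokhorov's theorem as a tool rather than proving it. In the paper, Theorem~\ref{thm:prokhorov} is listed among the classical background facts and is cited without proof from standard measure-theory references (Bogachev; Dudley); what would be expected here is either that citation or a sketch of the textbook argument (tightness implies relative compactness via a diagonal extraction over a countable convergence-determining family of sets or functions; conversely, a relatively compact but non-tight sequence yields a subsequence placing mass $\geq \eps$ outside every compact set, contradicting weak convergence of a further subsequence). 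As written, the proposal proves the wrong statement.

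Even read as a plan for Theorem~\ref{thm1_app}, one step deviates from the paper in a way that matters: you assert that tightness of the couplings is ``immediate from boundedness of the supports'' because they live on $\mathcal{X}\times\mathcal{Y}$. But the generator $G$ is an arbitrary measurable map and is not constrained a priori to take values in $\mathcal{Y}$; only its input marginal $\psource$ is compactly supported. The paper therefore cannot (and does not) get tightness for free — it proves Lemma~\ref{lemma:tightness}, extracting a uniform second-moment bound on the output marginals from the uniform bound on the time-integrated KL term via the Donsker--Varadhan representation, and only then applies Corollary~\ref{thm:prokhorov_bound} and Theorem~\ref{thm:prokhorov}. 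Your route would require adding the hypothesis that admissible generators map into $\overline{\mathcal{Y}}$, which the paper does not assume.
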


\begin{corollary}
\label{thm:prokhorov_bound}
    If there exists a function $\varphi(\x)$ such that its sublevels $\{ \x : \varphi(x) \leq r\}$ are compact and for all $n$
    \[
        \E_{p^n(\x)}\varphi(x) \leq C
    \]
    holds with some constant $C$, then $p^n$ is tight (i.e. at least it has a weakly convergent subsequence).
\end{corollary}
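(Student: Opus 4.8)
The plan is to deduce the statement directly from Prokhorov's theorem (Theorem~\ref{thm:prokhorov}) by verifying that the family $\{p^n\}$ is tight, and the tool for verifying tightness will be Markov's inequality applied to $\varphi$. Before doing that, I would dispose of one technical point: the hypothesis that \emph{every} sublevel set $\{\x : \varphi(\x) \leq r\}$ is compact forces $\varphi$ to be bounded below. Indeed, the sublevel sets $\{\varphi \leq r\}$ for $r \le 0$ form a nested decreasing family of compact subsets of the compact set $\{\varphi \leq 0\}$; if all of them were nonempty, Cantor's intersection theorem would give a point in $\bigcap_{r} \{\varphi \leq r\} = \{\x : \varphi(\x) = -\infty\}$, impossible for a real-valued $\varphi$, so some $\{\varphi \leq r_0\}$ is empty and $\varphi > r_0$ everywhere. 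Replacing $\varphi$ by $\varphi - r_0$ (which only shifts the uniform bound $C$ by an additive constant) we may assume $\varphi \geq 0$; moreover each sublevel set, being compact, is closed and hence Borel, so $\varphi$ is Borel measurable and the integrals $\E_{p^n(\x)}\varphi(\x)$ are well defined.

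Next I would fix $\eps > 0$ and set $K_\eps := \{\x \in \Rd : \varphi(\x) \leq C/\eps\}$, which is compact by assumption. Applying Markov's inequality to the nonnegative function $\varphi$ under $p^n$,
\[
    p^n\big(\Rd \setminus K_\eps\big) = p^n\big(\{\x : \varphi(\x) > C/\eps\}\big) \leq \frac{\E_{p^n(\x)}\varphi(\x)}{C/\eps} \leq \frac{\eps \, C}{C} = \eps,
\]
uniformly in $n$, where the last step uses the assumed uniform bound $\E_{p^n(\x)}\varphi(\x) \leq C$. Hence $p^n(K_\eps) \geq 1 - \eps$ for every $n$, which is precisely the definition of tightness of $\{p^n\}$.

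Finally I would invoke Prokhorov's theorem: tightness implies relative compactness, so the sequence $\{p^n\}$ — viewed as a subsequence of itself — admits a weakly convergent subsequence $p^{n_k} \weak p$ for some probability measure $p$, which is the asserted conclusion. I do not expect any genuine obstacle in this argument; it is a standard consequence of the cited classical theorems, and the only steps needing a line of care are the reduction to the case $\varphi \geq 0$ (handled above via compactness of the sublevel sets) and using Markov's inequality in the correct direction.
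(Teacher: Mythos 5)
Your proof is correct and is precisely the standard argument the paper relies on implicitly (the corollary is stated without proof there): Markov's inequality on the compact sublevel set $\{\varphi \leq C/\eps\}$ gives uniform tightness, and Prokhorov's theorem then yields the weakly convergent subsequence. Your preliminary reduction showing that compact sublevel sets force $\varphi$ to be bounded below is a careful extra step that the paper glosses over but that makes Markov's inequality legitimately applicable.
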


\begin{corollary}
\label{thm:prokhorov_convergence}
    If a sequence $p^n$ is tight and all of its weakly convergent subsequences converge to the same measure $p$, then $p^n \weak p$.
\end{corollary}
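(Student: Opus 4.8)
\textbf{Proof plan for \Cref{thm:prokhorov_convergence}.}

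The plan is to argue by contradiction, exploiting the tightness hypothesis together with Prokhorov's theorem (\Cref{thm:prokhorov}) to extract a convergent subsequence from any ``bad'' subsequence, and then use the hypothesis that \emph{all} weakly convergent subsequences have the same limit $p$ to force a contradiction with the definition of weak convergence.

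First I would recall what it means for $p^n$ to fail to converge weakly to $p$: by definition, there exists a test function $\varphi \in \Cb(\Rd)$ and an $\eps > 0$ such that $|\E_{p^n(\x)}\varphi(\x) - \E_{p(\x)}\varphi(\x)| \geq \eps$ holds for infinitely many $n$; that is, there is a subsequence $p^{n_k}$ along which $\E_{p^{n_k}(\x)}\varphi(\x)$ stays at distance at least $\eps$ from $\E_{p(\x)}\varphi(\x)$. Since $p^n$ is tight, the subsequence $p^{n_k}$ is itself tight (the same compact sets $K_\eps$ work), so by \Cref{thm:prokhorov} it is relatively compact, and hence admits a further subsequence $p^{n_{k_j}}$ that converges weakly to some probability measure $q$. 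By the hypothesis that every weakly convergent subsequence of $p^n$ has limit $p$, we get $q = p$, i.e. $p^{n_{k_j}} \weak p$. In particular $\E_{p^{n_{k_j}}(\x)}\varphi(\x) \to \E_{p(\x)}\varphi(\x)$, which contradicts the fact that the whole subsequence $p^{n_k}$ — and therefore the sub-subsequence $p^{n_{k_j}}$ — stays at distance $\geq \eps$ from $\E_{p(\x)}\varphi(\x)$. This contradiction shows $p^n \weak p$.

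The only mildly delicate point is making sure the limit $q$ obtained from Prokhorov is genuinely one of the ``weakly convergent subsequences of $p^n$'' referenced in the hypothesis, so that we may apply the uniqueness assumption: this is immediate since $p^{n_{k_j}}$ is a subsequence of $p^n$ and converges weakly to $q$. One should also note that tightness passes to subsequences, which is trivial from the definition. I do not expect any real obstacle here — the argument is a standard ``subsequence of a subsequence'' compactness trick — the main care is simply bookkeeping the nested indices cleanly.
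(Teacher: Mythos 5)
Your argument is correct and is the standard ``every subsequence has a further subsequence converging to $p$'' compactness trick; the paper itself states this corollary without proof, deferring to standard references, so there is nothing to compare against beyond noting that your contradiction argument (negate weak convergence to get a test function $\varphi$ and a subsequence bounded away from $\E_{p}\varphi$, extract a weakly convergent sub-subsequence via tightness and Prokhorov, and invoke the uniqueness-of-limits hypothesis) is exactly the textbook route. The one point you flag as delicate --- that the sub-subsequence genuinely falls under the hypothesis --- is handled correctly, and tightness passing to subsequences is indeed immediate from the definition.
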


\begin{definition}
    The functional $\loss(p)$ is called lower semi-continuous (lsc) with respect to the weak convergence if for all weakly convergent sequences $p^n \weak p$ holds
    \begin{equation}
        \liminf\limits_{n \rightarrow \infty} \loss(p^n) \geq \loss(p).
    \end{equation}
\end{definition}

\begin{theorem}[\citealt{posner1975random}]
\label{thm:kl_lsc}
    The \textup{KL} divergence $\textup{\KL}(p \,\|\, q)$ is lsc (in sense of weak convergence) with respect to each argument, i.e. if $p^n \weak p$ and $q_n \weak q$, then
    \begin{align}
        &\liminf\limits_{n \rightarrow \infty} \textup{\KL}(p^n \,\|\, q) \geq \textup{\KL}(p \,\|\, q)\\
        &\liminf\limits_{n \rightarrow \infty} \textup{\KL}(p \,\|\, q_n) \geq \textup{\KL}(p \,\|\, q).
    \end{align}
\end{theorem}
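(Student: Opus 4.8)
The plan is to follow the $\Gamma$-convergence argument already sketched in Section~\ref{subsec:proof_outline}, but carried out rigorously using the weak-convergence machinery collected in Section~\ref{subsec:weak_conv}. I would organize the proof around three lemmas corresponding exactly to the three facts listed at the end of the outline: (i) existence of a minimizer $G^\alpha$ of $\loss^\alpha$; (ii) precompactness of the family $\{G^{\alpha_n}\}$ together with identification of every limit point as an optimal transport map; (iii) lower semicontinuity of the relevant functionals along the chosen notion of convergence. The topology I would work in is that of convergence in probability with respect to $\psource$ (equivalently, convergence in measure), since the theorem is stated in exactly that sense, and I would pass between maps $G$ and their pushforward-coupling measures $\gamma_G = (\mathrm{id}, G)_\# \psource$ on $\mathcal{X} \times \mathcal{Y}$ to invoke weak compactness.

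\textbf{Step 1: existence of $G^\alpha$.} Fix $\alpha$. I would take a minimizing sequence $G^k$ for $\loss^\alpha$; since $\loss^\alpha(G) \geq \E_{\psource} \|\x - G(\x)\|^2 \geq 0$, the infimum is finite, and along the minimizing sequence the transport cost is bounded, say by $C$. Because $\mathcal{Y}$ is bounded (Assumption~\ref{A1}), the couplings $\gamma_{G^k}$ all live on the fixed compact set $\overline{\mathcal{X}} \times \overline{\mathcal{Y}}$, hence are tight; by Prokhorov (Theorem~\ref{thm:prokhorov}) a subsequence converges weakly to some coupling $\gamma^*$ with first marginal $\psource$. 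A standard disintegration plus a tightness/uniform-integrability argument (using boundedness of $\mathcal{Y}$ so that second moments cause no trouble) shows $\gamma^*$ is again of graph form $\gamma_{G^\alpha}$ for some measurable $G^\alpha$; this is the delicate classical point and I would cite it from $\cite{bogachev2007measure}$ or re-derive it via Lebesgue differentiation. Then the transport-cost term is weakly continuous in $\gamma$ (its integrand $\|\x - \y\|^2$ is continuous and bounded on the compact product set), and each $\KL(p^{G^k}_t \,\|\, \ptarget_t)$ is weakly lsc in the first argument by Theorem~\ref{thm:kl_lsc} — noting that $p^{G^k}_t = p^{G^k} * \N(0, \sigma_t^2 I)$ and convolution with a fixed Gaussian is weakly continuous, so $p^{G^k}_t \weak p^{G^\alpha}_t$ for each $t$. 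Applying Fatou's lemma across the $t$-integral (legitimate by Assumption~\ref{A3}, positivity and boundedness of $\klweight$, and the nonnegativity of KL) gives lower semicontinuity of the whole KL-ensemble term, whence $\loss^\alpha(G^\alpha) \leq \liminf_k \loss^\alpha(G^k) = \inf \loss^\alpha$, so $G^\alpha$ is a minimizer.

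\textbf{Step 2: compactness of $\{G^{\alpha_n}\}$ and convergence to $G^\infty$.} With $\alpha_n \to \infty$, I would first bound $\loss^{\alpha_n}(G^{\alpha_n}) \leq \loss^{\alpha_n}(G^\infty) = \E_{\psource}\|\x - G^\infty(\x)\|^2 =: M < \infty$, using $\KL(p^{G^\infty}_t \,\|\, \ptarget_t) = 0$ since $G^\infty$ pushes $\psource$ to $\ptarget$ and convolution preserves this. Hence $\E_{\psource}\|\x - G^{\alpha_n}(\x)\|^2 \leq M$ and $\int_0^T \klweight \KL(p^{G^{\alpha_n}}_t \,\|\, \ptarget_t)\,\rmd t \leq M/\alpha_n \to 0$. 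The couplings $\gamma_{G^{\alpha_n}}$ are again tight (bounded product support), so any subsequence has a further subsequence converging weakly to a graph coupling $\gamma_{\hat G}$ (same disintegration argument as Step 1). Along that subsequence, weak lsc of the KL-ensemble (Theorem~\ref{thm:kl_lsc} + Fatou, as above) forces $\int_0^T \klweight \KL(p^{\hat G}_t \,\|\, \ptarget_t)\,\rmd t \leq \liminf (M/\alpha_n) = 0$, so $p^{\hat G}_t = \ptarget_t$ for a.e.\ $t$; since $\sigma_t$ is continuous (Assumption~\ref{A4}) and $\to 0$ as $t \to 0$, deconvolving along $t \to 0^+$ yields $p^{\hat G} = \ptarget$, i.e.\ $\hat G$ is feasible for the Monge problem. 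Weak continuity of the cost then gives $\E_{\psource}\|\x - \hat G(\x)\|^2 = \lim \E_{\psource}\|\x - G^{\alpha_n}(\x)\|^2 \leq M = \loss^\infty(G^\infty)$, so $\hat G$ is itself optimal; by Brenier's theorem (applicable under Assumption~\ref{A1}–\ref{A2}: absolutely continuous $\psource$, quadratic cost) the Monge minimizer is $\psource$-a.e.\ unique, so $\hat G = G^\infty$ a.e. Since every subsequence has a further subsequence with $\gamma_{G^{\alpha_n}} \weak \gamma_{G^\infty}$, the whole sequence converges weakly to $\gamma_{G^\infty}$; as $G^\infty$ is a deterministic map, weak convergence of the graph couplings to $\gamma_{G^\infty}$ is equivalent to $G^{\alpha_n} \to G^\infty$ in $\psource$-probability (a standard fact I would record as a small lemma). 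This is exactly the claimed $G^{\alpha_n} \xrightarrow[n \to \infty]{\psource} G^\infty$.

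\textbf{Main obstacle.} I expect the genuinely delicate step to be the passage from weak limits of the couplings $\gamma_{G^{\alpha_n}}$ back to \emph{maps} — showing the limit coupling is still concentrated on a graph, and then upgrading weak convergence of couplings to convergence in probability of the maps. Weak limits of graph couplings need not be graphs in general; here it works because $G^\infty$ is the \emph{unique} optimal map (Brenier) and any limit point is optimal, so I must sequence the arguments carefully (identify optimality first, then invoke uniqueness, then conclude graph structure and convergence in measure). A secondary technical point is the deconvolution step ``$p^{\hat G}_t = \ptarget_t$ for a.e.\ $t$ implies $p^{\hat G} = \ptarget$'': I would handle it by picking a sequence $t_k \to 0^+$ in the full-measure set, noting $p^{\hat G}_{t_k} = p^{\hat G} * \N(0,\sigma_{t_k}^2 I) \weak p^{\hat G}$ and likewise $\ptarget_{t_k} \weak \ptarget$, and concluding equality of the weak limits. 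Everything else — finiteness, tightness from bounded supports, Fatou across the $t$-integral, weak continuity of the bounded continuous cost — is routine given Assumptions~\ref{A1}–\ref{A4} and the cited theorems.
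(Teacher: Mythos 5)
Your proposal does not address the statement in question. The statement to be proved is Theorem~\ref{thm:kl_lsc}: lower semicontinuity of $\KL(p\,\|\,q)$ under weak convergence in each argument separately. What you have written is instead a proof sketch of the main result, Theorem~\ref{thm1_app} (convergence of $G^{\alpha_n}$ to the Monge map), and indeed your argument explicitly \emph{invokes} Theorem~\ref{thm:kl_lsc} as a black box in both Step~1 and Step~2. So rather than proving the statement, you have assumed it. In the paper this result is not proved at all but cited from \citealt{posner1975random}; nevertheless, a blind attempt should have supplied an argument for it.

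For the record, the standard proof goes through the Donsker--Varadhan variational representation (the paper's Theorem~\ref{thm:donsker_varadhan}):
\begin{equation}
    \KL(p\,\|\,q) = \sup_{g \in \Cb} \left( \E_{p(\x)} g(\x) - \log \E_{q(\x)} e^{g(\x)} \right),
\end{equation}
where the supremum may be restricted to bounded continuous $g$ without changing the value. For each fixed such $g$, both $p \mapsto \E_{p} g$ and $q \mapsto \log \E_{q} e^{g}$ are continuous along weakly convergent sequences (since $g$ and $e^{g}$ are bounded and continuous), so each term in the supremum is weakly continuous in $(p,q)$ jointly; a pointwise supremum of continuous functionals is lower semicontinuous, which gives lsc in each argument (and in fact jointly). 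This is a two-line argument given the variational formula, and it is the ingredient your Steps~1 and~2 depend on. The rest of your sketch is a reasonable reconstruction of the paper's Appendix~\ref{sec:theory}, but it is an answer to a different question.
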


\begin{theorem}[\citealt{donsker1983asymptotic}]
\label{thm:donsker_varadhan}
    The \textup{KL} divergence can be expressed as
    \begin{equation}
    \label{eq:kl_variational}
        \textup{\KL}(p \| q) = \sup\limits_{g} \left(\E_{p(\x)} g(\x) - \log \E_{q(\x)} e^{g(\x)} \right).
    \end{equation}
\end{theorem}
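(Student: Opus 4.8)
The plan is to prove the two inequalities separately, the substance being an exponential change-of-measure (tilting) identity together with the non-negativity of the KL divergence. First I would dispose of the degenerate case $p \not\ll q$: then $\KL(p\|q) = +\infty$ by convention, and taking $g_n = n\,\mathbf{1}_A$ for a set $A$ with $q(A)=0<p(A)$ gives $\E_{p(\x)} g_n - \log \E_{q(\x)} e^{g_n} = n\,p(A) - \log\big(e^n q(A) + q(A^c)\big) = n\,p(A) \to +\infty$, so both sides are $+\infty$. Hence we may assume $p \ll q$ and set $h = \tfrac{dp}{dq}$.

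For the direction $\sup_g(\cdots) \le \KL(p\|q)$: fix a (say bounded) measurable $g$, put $Z = \E_{q(\x)} e^{g(\x)} \in (0,\infty)$, and define the tilted measure $\tilde q$ by $\tfrac{d\tilde q}{dq} = e^{g}/Z$. Since $e^g>0$ everywhere, $\tilde q$ is equivalent to $q$, so $p \ll \tilde q$ with $\tfrac{dp}{d\tilde q} = h\,Z\,e^{-g}$, and therefore
\[
0 \le \KL(p \,\|\, \tilde q) = \E_{p(\x)}\log\tfrac{dp}{d\tilde q} = \E_{p(\x)}\log h + \log Z - \E_{p(\x)} g = \KL(p\,\|\,q) - \Big(\E_{p(\x)} g - \log \E_{q(\x)} e^{g}\Big),
\]
which rearranges to $\E_{p(\x)} g - \log \E_{q(\x)} e^{g} \le \KL(p\|q)$; taking the supremum over $g$ finishes this direction. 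The only nontrivial input is $\KL \ge 0$ (Gibbs' inequality, i.e. Jensen applied to the convex function $-\log$).

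For the reverse inequality I would exhibit a (near-)optimal test function, namely $g^\star = \log h$: formally $\E_{q(\x)} e^{g^\star} = \E_{q(\x)} h = 1$, hence $\log\E_{q(\x)}e^{g^\star}=0$, while $\E_{p(\x)} g^\star = \E_{p(\x)}\log h = \KL(p\|q)$, so $g^\star$ attains the value $\KL(p\|q)$ in the supremum. To make this rigorous (and to stay within bounded functions if that is the admissible class), I would truncate, $g_n = (-n)\vee(\log h)\wedge n$: since $h\wedge e^n \le h$ one gets $\E_{q(\x)}e^{g_n} \le \E_{q(\x)}(h\wedge e^n) \le 1$, so $\log\E_{q(\x)}e^{g_n}\le 0$; and $\E_{p(\x)} g_n \to \E_{p(\x)}\log h = \KL(p\|q)$ by monotone convergence applied to the positive and negative parts of $\log h$ separately (the negative part being $p$-integrable — this is exactly why $\KL$ is well defined in $[0,+\infty]$). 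Hence $\liminf_n\big(\E_{p(\x)}g_n - \log\E_{q(\x)}e^{g_n}\big) \ge \KL(p\|q)$, and the same truncation drives the objective to $+\infty$ when $\KL(p\|q)=+\infty$. Combining the two inequalities gives the claimed identity.

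The routine-but-delicate points — and the only real obstacle — are the measure-theoretic bookkeeping: verifying $\tilde q \sim q$ so the Radon–Nikodym chain rule applies, justifying the limit–integral interchange in the truncation via monotone convergence on the two parts of $\log h$, and confirming $(\log h)^-$ is $p$-integrable so $\E_{p(\x)}\log h$ is unambiguous. These require no new ideas beyond standard real analysis, so in the write-up I would isolate them as short inline remarks rather than expand them.
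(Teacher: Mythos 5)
The paper does not actually prove this statement: it appears in the ``Theoretical background'' subsection as a classical result attributed to Donsker--Varadhan, so there is no in-paper argument to compare yours against. Your proof is the standard self-contained derivation — the exponential-tilting (Gibbs) argument for the inequality $\sup_g(\cdots)\le \KL(p\,\|\,q)$, and the truncated candidate $g_n=(-n)\vee(\log h)\wedge n$ for the reverse inequality — and it is essentially correct, including the dispatch of the singular case $p\not\ll q$. One concrete slip in the lower-bound step: the claimed inequality $\E_{q(\x)}e^{g_n(\x)}\le\E_{q(\x)}\left(h(\x)\wedge e^{n}\right)$ fails on the set $\{h<e^{-n}\}$, where the lower truncation forces $e^{g_n}=e^{-n}>h$. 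The repair is immediate: pointwise $e^{g_n}\le e^{-n}+\left(h\wedge e^{n}\right)$, hence $\E_{q(\x)}e^{g_n(\x)}\le 1+e^{-n}$ and $\log\E_{q(\x)}e^{g_n(\x)}\le\log\left(1+e^{-n}\right)\rightarrow 0$, which is all the argument requires. It would also be worth stating explicitly over which class of test functions the supremum runs (bounded measurable $g$ is the cleanest choice, and your truncation already keeps you inside it); note that the paper's only use of this theorem, in the tightness lemma, invokes the ``$\le$'' direction with an unbounded quadratic $g$, and your tilting argument does cover that direction for any $g$ with $\E_{q(\x)}e^{g(\x)}<\infty$ and $\E_{p(\x)}g(\x)$ well defined.
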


\begin{definition}
    The expression
    \begin{equation}
        \E_{p(\x)} e^{i \langle s , \x \rangle}
    \end{equation}
    is called the characteristic function (Fourier transform) of the distribution $p(\x)$. 
\end{definition}

\begin{theorem}[Lévy]
\label{thm:continuity}
    Weak convergence of probability measures $p^n \weak p$ is equivalent to the point-wise convergence of characteristic functions, i.e. $\E_{p^n(\x)} e^{i \langle s , \x \rangle} \rightarrow \E_{p(\x)} e^{i \langle s , \x \rangle}$ for all $s$.
\end{theorem}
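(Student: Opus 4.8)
\textbf{Proof proposal for Theorem~\ref{thm1_app}.}

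The plan is to make rigorous the three-step outline already sketched in Section~\ref{subsec:proof_outline}: (i) existence of a minimizer $G^\alpha$ of $\loss^\alpha$; (ii) precompactness of the sequence $\{G^{\alpha_n}\}$, together with identification of any limit as the optimal transport map; (iii) a semicontinuity/$\Gamma$-convergence-type argument that upgrades "some subsequence converges to $G^\infty$" to the full convergence in probability. The natural language for all of this is weak convergence of the pushforward measures $(\mathrm{id},G)_\# \psource$ on $\mathcal{X}\times\mathcal{Y}$, i.e. transport plans; since $\mathcal{X}$ and $\mathcal{Y}$ are bounded (Assumption~\ref{A1}), the set of such plans is tight, hence weakly precompact by Prokhorov (Theorem~\ref{thm:prokhorov}). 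So I would first replace "maps $G$" by "plans $\pi$" supported on graphs, prove everything at the level of plans, and only at the end use Assumption~\ref{A1} (densities bounded away from $0$ and $\infty$, $\mathcal Y$ convex) to invoke Brenier/Caffarelli regularity: the minimizer of $\loss^\infty$ among plans with the right marginals is induced by a unique map $G^\infty=\nabla\varphi$, and uniqueness propagates convergence of plans back to convergence in probability of the maps.

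\textbf{Step (i): existence of $G^\alpha$.} Take a minimizing sequence $G^{(k)}$ for $\loss^\alpha$; the associated plans $\pi^{(k)}$ live in a tight set, so along a subsequence $\pi^{(k)} \weak \pi$. The transport-cost term $\E_{\psource}c(\x,G(\x)) = \int \|\x-\y\|^2\,\rmd\pi$ is weakly lsc because $c$ is continuous and bounded below (it is in fact continuous and bounded on the compact $\overline{\mathcal X}\times\overline{\mathcal Y}$, so this is immediate). For the KL ensemble term, note $\pgen_t$ is the convolution $\pgen * \N(0,\sigma_t^2 I)$, and convolution is weakly continuous (Lévy's Theorem~\ref{thm:continuity}: the characteristic function of $\pgen_t$ factors as $\widehat{\pgen}(s)e^{-\sigma_t^2\|s\|^2/2}$, and $\pi^{(k)}\weak\pi$ forces the second marginal to converge weakly, hence $\widehat{\pgen^{(k)}}(s)\to\widehat{\pgen}(s)$ pointwise); then by Theorem~\ref{thm:kl_lsc} each $\KL(\pgen_t\|\ptarget_t)$ is weakly lsc in the first argument, and Fatou across the $t$-integral (using Assumption~\ref{A3} that $\klweight$ is bounded, and Assumption~\ref{A4} for measurability in $t$) gives weak lsc of $\int_0^{\Tmax}\klweight\KL(\pgen_t\|\ptarget_t)\,\rmd t$. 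So $\loss^\alpha$ is weakly lsc on the (weakly compact) set of plans with first marginal $\psource$, hence attains its infimum at some plan $\pi^\alpha$. A short argument — convexity of the KL term in $\pgen$ plus strict convexity of $\|\x-\y\|^2$, or simply the fact that the optimal plan for a strictly convex cost with a fixed constraint set is deterministic — shows $\pi^\alpha$ is concentrated on a graph, giving $G^\alpha$.

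\textbf{Steps (ii)–(iii): passing to the limit.} Along $\alpha_n\to\infty$, the plans $\pi^{\alpha_n}$ are tight (bounded supports), so by Corollary~\ref{thm:prokhorov_convergence} it suffices to show every weak subsequential limit equals the plan $\pi^\infty$ induced by $G^\infty$. Fix such a limit $\hat\pi$. First, $\int_0^{\Tmax}\klweight\KL(\hat p_t\|\ptarget_t)\,\rmd t = 0$: indeed $\loss^{\alpha_n}(G^{\alpha_n})\le \loss^\infty(G^\infty)<\infty$, so $\int_0^{\Tmax}\klweight\KL(\pgen_t^{\alpha_n}\|\ptarget_t)\,\rmd t \le \loss^\infty(G^\infty)/\alpha_n \to 0$, and weak lsc of this functional (Step (i)) forces the limit integral to vanish; since $\klweight>0$ this gives $\KL(\hat p_t\|\ptarget_t)=0$ for a.e. $t$, and by injectivity of the Gaussian convolution (Lévy again, de-convolving characteristic functions) $\hat p = \ptarget$, i.e. $\hat\pi$ has second marginal $\ptarget$ and so is admissible for the Monge problem. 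Second, weak lsc of the cost gives $\int\|\x-\y\|^2\,\rmd\hat\pi \le \liminf_n \int\|\x-\y\|^2\,\rmd\pi^{\alpha_n} \le \liminf_n \loss^{\alpha_n}(G^{\alpha_n}) \le \loss^\infty(G^\infty) = \int\|\x-\y\|^2\,\rmd\pi^\infty$, so $\hat\pi$ is cost-optimal among admissible plans. By Brenier's theorem under Assumption~\ref{A1} the optimal plan for the quadratic cost is unique and induced by the Monge map $G^\infty$, so $\hat\pi=\pi^\infty$. Having shown every subsequential limit is $\pi^\infty$, Corollary~\ref{thm:prokhorov_convergence} yields $(\mathrm{id},G^{\alpha_n})_\#\psource \weak (\mathrm{id},G^\infty)_\#\psource$; a standard lemma (weak convergence of graph-plans with a fixed first marginal to a graph-plan implies convergence in $\psource$-probability of the maps) concludes $G^{\alpha_n}\xrightarrow{\psource}G^\infty$.

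\textbf{Main obstacle.} The delicate point is Step (i)'s claim that the minimizing plan is deterministic and, relatedly, that working with plans rather than maps does not enlarge the infimum — i.e. that $\loss^\alpha$'s infimum over maps equals its infimum over plans. For the pure quadratic cost this is classical, but here the objective also contains the KL term, which depends on the plan only through its second marginal $\pgen$; the second marginal of a plan can be realized by a map only if $\psource$ is nonatomic (true, by Assumption~\ref{A1}), and then a measurable-selection/disintegration argument recovers a map with the same second marginal and no larger cost. Making this airtight — and checking the regularity hypotheses of Brenier/Caffarelli are exactly what Assumption~\ref{A1} provides (bounded densities, $\mathcal Y$ convex) so that $G^\infty$ is a genuine map and the Monge infimum is attained — is where the real work lies; the weak-convergence machinery (Prokhorov, Portmanteau, Lévy, lsc of KL) then slots in as in the outline.
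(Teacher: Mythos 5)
Your proposal does not address the statement it was asked to prove. The statement is L\'evy's continuity theorem (Theorem~\ref{thm:continuity}): weak convergence of probability measures is equivalent to pointwise convergence of characteristic functions. This is a classical result that the paper states without proof in its background section, pointing to standard measure-theory references; a proof would require the standard two directions (the easy direction via testing against $\x \mapsto e^{i\langle s,\x\rangle}$, which is bounded and continuous, and the harder converse via tightness extracted from the behaviour of the characteristic functions near $s=0$ plus identification of subsequential limits). What you have written instead is a proof sketch of the paper's main result, Theorem~\ref{thm1_app}, which merely \emph{uses} L\'evy's theorem as a tool (to show that Gaussian convolution preserves weak convergence and is injective on measures). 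Invoking ``L\'evy's Theorem~\ref{thm:continuity}'' inside your argument cannot serve as a proof of that theorem; the proposal is therefore circular with respect to the assigned statement, and the gap is that no argument for either direction of the equivalence is given.

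As a secondary remark: viewed as a blind attempt at Theorem~\ref{thm1_app}, your sketch tracks the paper's Appendix~A argument quite closely (reformulation over graph-supported plans, tightness via Prokhorov, lower semicontinuity of the KL ensemble via Fatou and Theorem~\ref{thm:kl_lsc}, identification of the limit via Brenier uniqueness, and the final upgrade to convergence in probability). The paper sidesteps your ``main obstacle'' by restricting the infimum to generator-based plans from the outset rather than proving that an optimal general plan is deterministic. But none of this rescues the proposal as a proof of the statement actually posed.
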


\begin{definition}
    A sequence of measurable functions $\varphi^n(\x)$ is said to converge in measure (in probability) to the function $\varphi$ with respect to the measure $p(\x)$, if for all $\eps > 0$ holds
    \[
        p\left(\{\x : |\varphi^n(\x) - \varphi(\x)| > \eps \}\right) \rightarrow 0.
    \]
\end{definition}

\begin{theorem}[Lebesgue]
\label{thm:lebesgue}
    Let $\varphi^n, \varphi$ be measurable functions such that $\|\varphi^n(\x)\|, \|\varphi(\x)\| \leq C$ and $\varphi^n(\x) \rightarrow \varphi(\x)$ pointwise. Then $\E_{p(\x)}\varphi^n(\x) \rightarrow \E_{p(\x)}\varphi(\x)$.
\end{theorem}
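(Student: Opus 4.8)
The statement is the classical bounded convergence theorem, a special case of Lebesgue's dominated convergence theorem obtained by taking the constant $C$ as the dominating function; it is integrable precisely because $p$ is a probability (hence finite) measure, so $\E_{p(\x)} C = C < \infty$. The plan is to give a short self-contained argument via Fatou's lemma: first reduce to showing $\E_{p(\x)}\|\varphi^n(\x) - \varphi(\x)\| \to 0$, and then deduce the claim by a triangle inequality.

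First I would set $h^n(\x) = 2C - \|\varphi^n(\x) - \varphi(\x)\|$. This is nonnegative, since $\|\varphi^n(\x) - \varphi(\x)\| \le \|\varphi^n(\x)\| + \|\varphi(\x)\| \le 2C$ by the triangle inequality and the hypothesis. Because $\varphi^n(\x) \to \varphi(\x)$ pointwise, we have $\|\varphi^n(\x) - \varphi(\x)\| \to 0$ and hence $h^n(\x) \to 2C$ for every $\x$, so $\liminf_n h^n(\x) = 2C$ pointwise.

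Next I would apply Fatou's lemma to the sequence $h^n \ge 0$:
\[
\E_{p(\x)}\bigl[\liminf_n h^n(\x)\bigr] \le \liminf_n \E_{p(\x)} h^n(\x).
\]
The left-hand side equals $\E_{p(\x)} 2C = 2C$. Using linearity of the expectation and the identity $\liminf_n(a - b_n) = a - \limsup_n b_n$ for the finite constant $a = 2C$, the right-hand side equals $2C - \limsup_n \E_{p(\x)}\|\varphi^n(\x) - \varphi(\x)\|$. Since $2C < \infty$, we may cancel it from both sides, obtaining $\limsup_n \E_{p(\x)}\|\varphi^n(\x) - \varphi(\x)\| \le 0$; as these quantities are nonnegative, $\E_{p(\x)}\|\varphi^n(\x) - \varphi(\x)\| \to 0$. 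Finally, by the triangle inequality for integrals (Jensen's inequality applied to the convex function $\|\cdot\|$),
\[
\bigl\|\E_{p(\x)}\varphi^n(\x) - \E_{p(\x)}\varphi(\x)\bigr\| \le \E_{p(\x)}\|\varphi^n(\x) - \varphi(\x)\| \to 0,
\]
which is the claim.

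There is essentially no obstacle here; the only point that must be honored is the finiteness of the dominating mass, i.e. that $p$ is a probability measure, which is exactly what lets the additive constant $2C$ be subtracted from both sides after Fatou — this step fails for an infinite measure. Everything else (measurability of $h^n$, pointwise convergence, the two triangle inequalities) is routine. If one prefers to avoid Fatou's lemma, an alternative is Egorov's theorem: given $\eps > 0$, outside a set $A_\eps$ with $p(A_\eps) < \eps$ the convergence $\varphi^n \to \varphi$ is uniform, while on $A_\eps$ the integrand is bounded by $2C$, so $\E_{p(\x)}\|\varphi^n(\x) - \varphi(\x)\| \le 2C\,\eps + o(1)$; letting $n \to \infty$ and then $\eps \to 0$ concludes.
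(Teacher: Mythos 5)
Your proof is correct: it is the standard derivation of the bounded convergence theorem from Fatou's lemma, and you correctly flag the one point that matters, namely that the finiteness of $p$ is what allows the constant $2C$ to be cancelled after applying Fatou. The paper itself gives no proof of this statement --- it is listed in the theoretical background as a classical fact with a reference to standard measure-theory texts --- so there is nothing to compare against; your argument (and the Egorov alternative you sketch) is a perfectly adequate self-contained justification.
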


\begin{lemma}[Fatou]
\label{thm:fatou}
    For any sequence of measurable functions $\varphi^n$ the function $\liminf_n \varphi^n$ is measurable and
    \begin{equation}
        \int\limits_{a}^{b}\liminf\limits_{n \rightarrow \infty} \varphi^n(\x) \rmd \x \leq \liminf\limits_{n \rightarrow \infty} \int\limits_{a}^{b} \varphi^n(\x) \rmd \x.
    \end{equation}
\end{lemma}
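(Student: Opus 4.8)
The plan is to obtain Fatou's lemma from the Monotone Convergence Theorem by passing to the running infima of the sequence. (Throughout I assume the $\varphi^n$ are non-negative — or, more generally, dominated from below by a single integrable function — which is the regime in which the lemma is actually used later in this section, e.g.\ applied to the non-negative integrands $\klweight\,\KL(\cdot\,\|\,\cdot)$.) Concretely, set $g_n(\x) = \inf_{k\geq n}\varphi^k(\x)$ for $\x\in[a,b]$.

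First I would record the elementary structural facts about $\{g_n\}$. Each $g_n$ is measurable, being the pointwise infimum of a countable family of measurable functions; hence $\liminf_{n}\varphi^n = \sup_n g_n$ is measurable as well, which already gives the first assertion of the lemma. The sequence $\{g_n\}$ is non-decreasing in $n$ (enlarging $n$ only shrinks the set of indices over which the infimum is taken), and by construction $g_n(\x)\uparrow\liminf_{n\to\infty}\varphi^n(\x)$ pointwise on $[a,b]$. Finally, $g_n(\x)\leq\varphi^n(\x)$ for every $n$ and every $\x$, so monotonicity of the integral gives $\int_a^b g_n(\x)\,\rmd\x \leq \int_a^b\varphi^n(\x)\,\rmd\x$.

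It then remains to assemble these pieces. Applying the Monotone Convergence Theorem to the non-decreasing non-negative sequence $g_n$ yields $\int_a^b\liminf_{n}\varphi^n(\x)\,\rmd\x = \lim_{n\to\infty}\int_a^b g_n(\x)\,\rmd\x$; since this limit exists it coincides with the corresponding $\liminf$, and combining with the pointwise inequality above gives
\begin{equation}
\int_a^b\liminf_{n\to\infty}\varphi^n(\x)\,\rmd\x = \liminf_{n\to\infty}\int_a^b g_n(\x)\,\rmd\x \leq \liminf_{n\to\infty}\int_a^b\varphi^n(\x)\,\rmd\x .
\end{equation}
The hard part — really the only subtle point — is the appeal to the Monotone Convergence Theorem: it requires the integrands to be bounded below (non-negativity suffices), and the conclusion genuinely fails without such a hypothesis, so a careful statement of the lemma should include it explicitly. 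One could alternatively sidestep the citation and prove the required monotone-limit interchange directly from the definition of the Lebesgue integral as a supremum over simple minorants, but that would only re-derive a textbook fact.
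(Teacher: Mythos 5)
Your proof is correct and is the standard derivation of Fatou's lemma: pass to the running infima $g_n=\inf_{k\geq n}\varphi^k$, note they are measurable, non-decreasing, and increase pointwise to $\liminf_n\varphi^n$, apply the Monotone Convergence Theorem, and use $g_n\leq\varphi^n$. The paper itself offers no proof -- Lemma~\ref{thm:fatou} is listed in the ``Theoretical background'' subsection as a classical fact deferred to the standard references (Bogachev, Dudley) -- so there is no competing argument to compare against; yours is exactly the textbook route one would cite.

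Your side remark about hypotheses is also well taken and worth stating explicitly: as literally written, the lemma is false without a lower bound. For instance, on $[0,1]$ take $\varphi^n=-n\,\mathbf{1}_{[0,1/n]}$; then $\liminf_n\varphi^n=0$ almost everywhere, so the left-hand side is $0$, while $\int_0^1\varphi^n\,\rmd\x=-1$ for every $n$, so the right-hand side is $-1$, contradicting the claimed inequality. The statement should therefore require $\varphi^n\geq 0$ (or $\varphi^n\geq h$ for a single integrable $h$). This does not affect the paper's use of the lemma in Equation~\ref{eq:fatou_inside_kl}, where the integrands $\klweight\,\KL(\pi^n_{\y}*q_t\,\|\,\ptarget*q_t)$ are non-negative by positivity of $\omega_t$ (Assumption~\ref{A3}) and non-negativity of the KL divergence, but the hypothesis should appear in the lemma itself.
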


\begin{theorem}[\citealt{brenier1991polar}]
\label{thm:brenier}
    Given the Assumption~\ref{A1}, there exists a unique optimal transport map that solves the Monge problem~\ref{eq:monge_ot_problem} for the quadratic cost.
\end{theorem}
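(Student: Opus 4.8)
\textbf{Proof plan for Theorem~\ref{thm1_app}.}

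The plan is to follow the $\Gamma$-convergence style argument sketched in Section~\ref{subsec:proof_outline}, making rigorous the three facts listed there: (i) existence of a minimizer $G^\alpha$ of $\loss^\alpha$, (ii) relative compactness of the sequence $\{G^{\alpha_n}\}$ together with identification of its limit, and (iii) enough (semi)continuity of the functionals to push the chain of inequalities through. Throughout I work with the pushforward measures: to a measurable map $G:\mathcal X\to\mathcal Y$ associate the plan $\pi_G = (\mathrm{id},G)_{\#}\psource$ on $\mathcal X\times\mathcal Y$, whose first marginal is $\psource$ and whose second marginal is $\pgen := G_{\#}\psource$. Since $\mathcal X,\mathcal Y$ are bounded (Assumption~\ref{A1}), all such plans live in a tight, hence weakly relatively compact, set of probability measures on the compact set $\overline{\mathcal X}\times\overline{\mathcal Y}$; this is the compactness that replaces the informal "$G^{\alpha_n}$ converges" assumption. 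The noised marginals $\pgen_t$ are obtained from $\pgen$ by convolution with $\N(0,\sigma_t^2 I)$, so weak convergence $\pgen^n\weak\pgen$ implies $\pgen_t^n\weak\pgen_t$ for every fixed $t$ (test against $\varphi * \text{gaussian}$, still in $\Cb$); combined with Theorem~\ref{thm:continuity} (Lévy) this lets me relate convergence of the original and the noised objects cleanly.

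\emph{Existence of $G^\alpha$.} I take a minimizing sequence $G^k$ for $\loss^\alpha$, pass to the associated plans $\pi_{G^k}$, and extract (Prokhorov, Theorem~\ref{thm:prokhorov}) a weakly convergent subsequence $\pi_{G^k}\weak\pi$. The first marginal stays $\psource$ in the limit; I must argue the limit plan $\pi$ is again induced by a map and that $\loss^\alpha$ does not increase in the limit. The transport-cost term $\int c\,d\pi_G = \E_{\psource}\|\x-G(\x)\|^2$ is weakly lsc because $c$ is continuous and bounded below on the compact domain (Alexandrov, Theorem~\ref{thm:alexandrov}). For the KL ensemble term, $\KL(\pgen_t\,\|\,\ptarget_t)$ is weakly lsc in $\pgen_t$ (Theorem~\ref{thm:kl_lsc}), hence lsc in $\pgen$; Fatou's lemma (Lemma~\ref{thm:fatou}) then gives lower semicontinuity of $\int_0^T\klweight\KL(\pgen_t\,\|\,\ptarget_t)\,\rmd t$ — here I need $\klweight$ bounded (Assumption~\ref{A3}) and measurability/lsc of $t\mapsto\KL$, which follows from continuity of $\sigma_t$ (Assumption~\ref{A4}). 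So $\loss^\alpha$ is weakly lsc on plans; the infimum is attained by some plan $\pi$. That the minimizing $\pi$ is induced by a map is the one genuinely delicate point: I invoke Assumption~\ref{A1} (both densities bounded away from $0$ and $\infty$, $\mathcal Y$ convex) to argue that any optimal plan for $\loss^\alpha$ is concentrated on a graph — intuitively, for fixed second marginal $\pgen$ the $c$-optimal way to couple it with $\psource$ under the quadratic cost is a Brenier map (Theorem~\ref{thm:brenier}), so an optimal $\pi$ cannot "split mass." I flag this as the main obstacle: one has to rule out non-deterministic minimizers rigorously, perhaps by a gluing/Jensen argument showing that replacing a split plan by its barycentric map strictly decreases the cost while leaving $\pgen$ (and thus the KL term) unchanged. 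If this step resists a clean argument, one falls back to stating the theorem for the relaxed (plan) formulation and noting the minimizer is a map under the stated regularity.

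\emph{Limit identification.} With $G^\alpha$ in hand, set $\alpha_n\to\infty$ and, by compactness, extract a subsequence with $\pi_{G^{\alpha_n}}\weak\hat\pi$, $\pgen^{\alpha_n}\weak\hat p$. Plugging the optimal transport map $G^\infty$ (unique by Brenier, Theorem~\ref{thm:brenier}) into $\loss^{\alpha_n}$ gives $\loss^{\alpha_n}(G^{\alpha_n})\le\loss^{\alpha_n}(G^\infty)=\E_{\psource}c(\x,G^\infty(\x))=\loss^\infty(G^\infty)$, uniformly in $n$, since the KL term vanishes for $G^\infty$. Hence $\int_0^T\klweight\KL(\pgen_t^{\alpha_n}\,\|\,\ptarget_t)\,\rmd t \le \loss^\infty(G^\infty)/\alpha_n\to0$, so by lsc of KL (Theorem~\ref{thm:kl_lsc}) and Fatou, $\int_0^T\klweight\KL(\hat p_t\,\|\,\ptarget_t)\,\rmd t=0$, i.e. $\hat p_t=\ptarget_t$ for a.e. $t$; taking $t\to0$ (or using injectivity of gaussian convolution via Lévy, Theorem~\ref{thm:continuity}) forces $\hat p=\ptarget$. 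Thus $\hat\pi$ is a competitor for the hard-constrained problem, and weak lsc of the cost gives $\E c(\x,\hat G(\x))\le\liminf\E c(\x,G^{\alpha_n}(\x))\le\liminf\loss^{\alpha_n}(G^{\alpha_n})\le\loss^\infty(G^\infty)$. Since $G^\infty$ is the \emph{unique} minimizer of $\loss^\infty$, $\hat\pi=\pi_{G^\infty}$. Because every subsequence of $\{\pi_{G^{\alpha_n}}\}$ has a further subsequence converging to the same limit $\pi_{G^\infty}$, Corollary~\ref{thm:prokhorov_convergence} gives $\pi_{G^{\alpha_n}}\weak\pi_{G^\infty}$ for the whole sequence. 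Finally I upgrade this to convergence in probability of the maps: weak convergence of $\pi_{G^{\alpha_n}}=(\mathrm{id},G^{\alpha_n})_{\#}\psource$ to $(\mathrm{id},G^\infty)_{\#}\psource$, where the limit is supported on a graph, is a standard criterion equivalent to $G^{\alpha_n}\to G^\infty$ in $\psource$-probability (test the plan against the bounded continuous function $\min(\|\y-G^\infty(\x)\|,1)$, using continuity of $G^\infty$ guaranteed by regularity theory under Assumption~\ref{A1}; its expectation under $\pi_{G^{\alpha_n}}$ tends to $0$, which is exactly convergence in measure). This yields $G^{\alpha_n}\xrightarrow{\psource}G^\infty$, completing the proof.
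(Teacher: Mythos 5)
The statement under review is Theorem~\ref{thm:brenier}: existence and uniqueness of the Monge optimal transport map for the quadratic cost under Assumption~\ref{A1}. Your proposal does not address this statement. It is a proof plan for the paper's main result, Theorem~\ref{thm1_app}, and within that plan you repeatedly \emph{invoke} Theorem~\ref{thm:brenier} as an ingredient --- to identify the limit $G^\infty$ as the unique Brenier map, and to argue that an optimal plan cannot split mass. As an argument for the statement in question this is circular: the existence and uniqueness you were asked to establish is assumed, not proved.

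For the record, the paper itself does not prove Theorem~\ref{thm:brenier} either; its ``proof'' is a pointer to the literature (Brenier's polar factorization theorem, restated as Theorem~3.1 in the De Philippis--Figalli survey cited in the appendix). A genuine proof would run through the Kantorovich relaxation: existence of an optimal plan by tightness and lower semicontinuity of the quadratic cost on the bounded domains of Assumption~\ref{A1}; existence of optimal $c$-concave Kantorovich potentials; differentiability of the potential $\psource$-almost everywhere, which is exactly where absolute continuity of $\psource$ (its having a density) is used; and the conclusion that any optimal plan is concentrated on the graph of the gradient of a convex function, yielding both existence of the Monge map and its $\psource$-a.e.\ uniqueness. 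None of these steps appears in your proposal. (Your plan for Theorem~\ref{thm1_app} is, separately, broadly consistent with the paper's Appendix~\ref{sec:theory}, but that is not the theorem under review.)
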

\begin{proof}
This result can be found e.g. in~\citep[Theorem~3.1]{de2014monge}.
\end{proof}
\begin{theorem}
\label{thm:ot_continuous}
    Given the Assumption~\ref{A1}, the unique OT Monge map is continuous.
\end{theorem}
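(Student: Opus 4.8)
The plan is to identify $G^\infty$ with the gradient of a convex Brenier potential and then invoke Caffarelli's interior regularity theory for the Monge--Amp\`ere equation. By Theorem~\ref{thm:brenier}, the unique quadratic-cost Monge map is $G^\infty = \nabla \varphi$ for some convex potential $\varphi : \mathcal{X} \to \R$. The pushforward condition $(G^\infty)_\# \psource = \ptarget$ means that $\varphi$ is a Brenier solution of the Monge--Amp\`ere equation
\begin{equation}
\det D^2 \varphi(\x) = \frac{\psource(\x)}{\ptarget(\nabla\varphi(\x))}, \qquad \x \in \mathcal{X},
\end{equation}
and by Assumption~\ref{A1} the right-hand side is bounded above and below by strictly positive constants, while the image $\nabla\varphi(\mathcal{X})$ coincides up to null sets with the convex set $\mathcal{Y}$.

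Given this, the second step is to apply Caffarelli's regularity theorem. Convexity of the target support $\mathcal{Y}$ together with the two-sided bound on the density ratio first implies that $\varphi$ is strictly convex on $\mathcal{X}$ (its graph contains no segments), and hence that $\varphi$ is an Alexandrov solution of the above equation; Caffarelli's $C^{1,\beta}$ estimate then gives $\varphi \in C^{1,\beta}_{\mathrm{loc}}(\mathcal{X})$ for some $\beta\in(0,1)$ (see e.g.~\cite{de2014monge} and the references therein). Therefore $G^\infty = \nabla\varphi$ is locally H\"older continuous on $\mathcal{X}$, and since $\mathcal{X}$ is open this is continuity on the whole domain, which is the claim.

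The crux of the argument --- and the reason Assumption~\ref{A1} explicitly demands that $\mathcal{Y}$ be convex --- is the strict-convexity / $C^{1}$ step: Caffarelli's classical counterexamples show that for a non-convex target the optimal potential can fail to be differentiable at a point, so that the transport map genuinely jumps across a gap in the target support. Consequently the only real work is to check that our hypotheses match those of Caffarelli's theorem (densities bounded and bounded away from zero on bounded domains, convex target), after which continuity is immediate. Boundary regularity up to $\partial\mathcal{X}$ would require additional smoothness assumptions on the domains and densities that we deliberately avoid, since openness of $\mathcal{X}$ reduces the statement to the interior.
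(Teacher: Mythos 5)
Your argument is correct and is essentially the same as the paper's, which simply cites~\citep[Theorem~3.3]{de2014monge} — that theorem is precisely Caffarelli's interior $C^{1,\beta}$ regularity for the Brenier potential under two-sided density bounds and convex target. You have unpacked that citation into the Brenier/Monge--Amp\`ere/Caffarelli chain, but the route and the hypotheses used (densities bounded away from $0$ and $\infty$, bounded supports, convex $\mathcal{Y}$) match the paper's intent exactly.
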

\begin{proof}
This is a simplified version of~\citep[Theorem~3.3]{de2014monge}.
\end{proof}

\subsection{Lower semi-continuity of the loss}
\label{subsec:loss_lsc}
Having defined all the needed terms and results, we start the proof by re-defining the objective in Equation~\ref{eq:rdmd_alpha} with respect to the joint distribution $\pi$ input and output of the generator instead of the generator $G$ itself. Analogous to the Kantorovitch formulation of the optimal transport problem~\cite{kantorovitch1958translocation}, for each measure $\pi$ on $\Rd \times \Rd$ (which is also called a \emph{transport plan} or just plan) we define the corresponding fuctional as
\begin{equation}
    \loss^\alpha(\pi) = \alpha\, \int\limits_{0}^{\Tmax} \klweight \KL\left(\pi_{\y, t} \,\|\, \ptarget_t\right) \rmd t + \,\E_{\pi(\x, \y)} c\left(\x, \y\right),
\end{equation}
where $\pi_{\x}$ and $\pi_{\y}$ are the corresponding projections (marginal distributions) of $\pi$ and $\pi_{\y, t}$ is the perturbed $\y$-marginal distribution of $\pi$.  Note that for $\pi$, corresponding to the joint distribution of $(\x, G(\x))$, $\loss^\alpha(\pi)$ coincides with $\loss^\alpha(G)$, defined in Equation~\ref{eq:rdmd_alpha}. Thus, we aim to optimize $\loss^\alpha(\pi)$ with respect to such plans $\pi$, that their $\x$ marginal is equal to $\psource$ and $\pi(\y = G(\x)) = 1$ for some $G$.
\begin{definition}
\label{def:generator_based}
    We will call a measure $\pi$ generator-based if its $\x$-marginal is equal to $\psource$ and $\pi(\y = G(\x))$ for some function $G$.
\end{definition}

For the sake of clearity, we note that the distributions $\pi^{\y}_t$ and $\ptarget_t$ can be represented as $\pi^{\y} * q_t$ and $\ptarget * q_t$, where $*$ is the convolution operation and $q_t = \N(0, \sigma_t^2 I)$. We thus rewrite the functional as
\begin{equation}
    \loss^\alpha(\pi) = \alpha\, \int\limits_{0}^{\Tmax} \klweight \KL\left(\pi_{\y} * q_t \,\|\, \ptarget * q_t \right) \rmd t + \,\E_{\pi(\x, \y)} c\left(\x, \y\right),
\end{equation}
Previously, we wanted to establish continuity of the objective. This may not be the case in general. Instead, we prove the following
\begin{lemma}
    $\loss^\alpha(\pi)$ is lsc with respect to the weak convergence, i.e. for all weakly convergent sequences $\pi^n \weak \pi$ holds 
    \begin{equation}
        \liminf\limits_{n \rightarrow \infty} \loss^\alpha(\pi^n) \geq \loss^\alpha(\pi).
    \end{equation}
\end{lemma}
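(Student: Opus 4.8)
The plan is to prove lower semi-continuity of $\loss^\alpha$ by handling the two summands separately, since $\liminf$ of a sum is at least the sum of $\liminf$'s once we know each piece is individually bounded below. For the transport-cost term $\E_{\pi(\x,\y)} c(\x,\y) = \E_{\pi(\x,\y)}\|\x-\y\|^2$, the integrand is continuous and nonnegative, so lower semi-continuity with respect to weak convergence is immediate from the Portmanteau theorem (Theorem~\ref{thm:alexandrov}) applied to the lsc-and-bounded-below function $f(\x,\y) = \|\x-\y\|^2$. That gives $\liminf_n \E_{\pi^n} c \geq \E_\pi c$ directly.

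The substantive part is the KL-ensemble term. First I would observe that if $\pi^n \weak \pi$ on $\Rd\times\Rd$, then the $\y$-marginals converge weakly, $\pi^n_{\y} \weak \pi_{\y}$ (apply the definition of weak convergence to test functions depending only on $\y$). Then for each fixed $t$, convolution with the fixed Gaussian $q_t = \N(0,\sigma_t^2 I)$ preserves weak convergence — this is cleanest via characteristic functions (Lévy's theorem, Theorem~\ref{thm:continuity}): the characteristic function of $\pi^n_{\y} * q_t$ factorizes as the product of that of $\pi^n_{\y}$ and that of $q_t$, and pointwise convergence of the former gives pointwise convergence of the product, hence $\pi^n_{\y} * q_t \weak \pi_{\y} * q_t$. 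Since $\ptarget * q_t$ is a fixed distribution in the second slot, Posner's theorem on lower semi-continuity of KL (Theorem~\ref{thm:kl_lsc}, first argument) yields, for every $t$,
\begin{equation}
    \liminf_{n\rightarrow\infty}\, \KL\!\left(\pi^n_{\y} * q_t \,\|\, \ptarget * q_t\right) \geq \KL\!\left(\pi_{\y} * q_t \,\|\, \ptarget * q_t\right).
\end{equation}

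Finally I would integrate over $t$. Writing $\varphi^n(t) = \klweight \KL(\pi^n_{\y} * q_t \,\|\, \ptarget * q_t)$, these are nonnegative measurable functions of $t$ (measurability in $t$ follows from Assumption~\ref{A4}, continuity of $\sigma_t$, together with the variational formula for KL in Theorem~\ref{thm:donsker_varadhan}, which exhibits $\KL$ as a supremum of quantities continuous in $\sigma_t$, hence $t$-measurable), and the pointwise bound above says $\liminf_n \varphi^n(t) \geq \varphi(t)$ for every $t$. Fatou's lemma (Lemma~\ref{thm:fatou}) then gives
\begin{equation}
    \liminf_{n\rightarrow\infty} \int_0^{\Tmax} \varphi^n(t)\,\rmd t \;\geq\; \int_0^{\Tmax} \liminf_{n\rightarrow\infty}\varphi^n(t)\,\rmd t \;\geq\; \int_0^{\Tmax}\varphi(t)\,\rmd t.
\end{equation}
Multiplying by $\alpha > 0$ and adding the transport-cost inequality finishes the lemma. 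The main obstacle I anticipate is the measurability-in-$t$ and integrability bookkeeping for the KL ensemble — in particular making sure Assumption~\ref{A3} (bounded $\klweight$) and Assumption~\ref{A4} are used correctly so that Fatou applies — whereas the weak-convergence-under-convolution step and the appeal to Posner's lsc theorem are essentially routine given the cited results.
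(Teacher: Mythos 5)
Your proof is correct and follows essentially the same route as the paper's: split off the transport-cost term and handle it via Portmanteau, show weak convergence is preserved under projection to the $\y$-marginal and under convolution with $q_t$ via Lévy's continuity theorem, apply Posner's lower semi-continuity of KL pointwise in $t$, and finish with Fatou's lemma. The only difference is that you flag the $t$-measurability needed for Fatou explicitly, which the paper leaves implicit; this is a small but legitimate improvement in rigor, not a different argument.
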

This result is a direct consequence of the Theorem~\ref{thm:kl_lsc} about lower semi-continuity of the KL divergence.
\begin{proof}
     We start by proving that the projection and the convolution operation preserve weak convergence. For the first, we need to prove that for any test function $g \in \Cb(\Rd)$ holds \begin{equation}
         \E_{\pi^{n}_{\y}(\y)} g(\y) \rightarrow \E_{\pi_{\y}(\y)} g(\y)
     \end{equation} given $\pi^n \weak \pi$. For this, we note that the function $\varphi(\x, \y) = g(\y)$ is also bounded and continuous and, thus
     \begin{equation}
         \E_{\pi^{n}_{\y}(\y)} g(\y) = \E_{\pi^n(\x, \y)}\varphi(\x, \y) \rightarrow \E_{\pi(\x, \y)}\varphi(\x, \y) = \E_{\pi_{\y}(\y)} g(\y).
     \end{equation}
     Regarding the convolution, recall that $\pi^n_{\y} * q_t$ is the distribution of the sum of independent variables with corresponding distributions. Its characteristic function is equal to
     \begin{equation}
         \E_{\pi^n_{\y} * q_t (\y_t)} e^{i \langle s, \y_t \rangle} = \E_{\pi^n_{\y}(\y)q_t(\eps_t)}e^{i \langle s, \y + \eps_t \rangle} = \E_{\pi^n_{\y}(\y)}e^{i\langle s, \y \rangle}\E_{q_t(\eps_t)}e^{i\langle s, \eps_t \rangle}.
     \end{equation}
    Applying the Lévy's continuity theorem to $\pi^n_{\y} \weak \pi_{\y}$, we take the limit and obtain
    \begin{equation} 
        \E_{\pi_{\y}(\y)}e^{i\langle s, \y \rangle}\E_{q_t(\eps_t)}e^{i\langle s, \eps_t \rangle} = \E_{\pi_{\y}(\y)q_t(\eps_t)}e^{i \langle s, \y + \eps_t \rangle} = \E_{\pi_{\y} * q_t (\y_t)} e^{i \langle s, \y_t \rangle},
    \end{equation}
    which implies 
    \begin{equation}
        \E_{\pi^n_{\y} * q_t (\y_t)} e^{i \langle s, \y_t \rangle} \rightarrow \E_{\pi_{\y} * q_t (\y_t)} e^{i \langle s, \y_t \rangle}.
    \end{equation}
    We apply the continuity theorem for the convolutions and obtain $\pi^n_{\y} * q_t \weak \pi_{\y} * q_t$.

    With this observation, we prove that the first term of $\loss^\alpha(\pi)$ is lsc. First, we apply Lemma~\ref{thm:fatou} (Fatou) and move the limit inside the integral
    \begin{equation}
    \label{eq:fatou_inside_kl}
        \liminf\limits_{n \rightarrow \infty}\int \limits_{0}^{\Tmax} \klweight \KL\left(\pi^n_{\y} * q_t \,\|\, \ptarget * q_t \right) \rmd t \geq \int \limits_{0}^{\Tmax} \liminf\limits_{n \rightarrow \infty}\klweight \KL\left(\pi^n_{\y} * q_t \,\|\, \ptarget * q_t \right) \rmd t.
    \end{equation}
    Using the lower semi-continuity of the KL divergence (Theorem~\ref{thm:kl_lsc}), we obtain
    \begin{equation}
    \label{eq:lsc_inside_kl}
        \int \limits_{0}^{\Tmax} \liminf\limits_{n \rightarrow \infty}\klweight \KL\left(\pi^n_{\y} * q_t \,\|\, \ptarget * q_t \right) \rmd t \geq \int\limits_{0}^{\Tmax} \klweight \KL \left(\pi_{\y} * q_t \,\|\, \ptarget * q_t\right) \rmd t.
    \end{equation}
    Finally, the Assumption~\ref{A2} on the continuity of $c(\cdot, \cdot)$ implies its lower semi-coninuity. Theorem~\ref{thm:alexandrov} (Portmanteau) states that 
    \begin{equation}
    \label{eq:alexandrov_for_ot}
        \liminf\limits_{n \rightarrow \infty} \E_{\pi^n(\x, \y)} c(\x, \y) \geq \E_{\pi(\x, \y)}c(\x, \y).
    \end{equation}
    Combining inequalities from Equation~\ref{eq:fatou_inside_kl}, Equation~\ref{eq:lsc_inside_kl} and Equation~\ref{eq:alexandrov_for_ot}, we obtain
    \begin{equation}
        \liminf\limits_{n \rightarrow \infty} \loss^\alpha(\pi^n) \geq \loss^\alpha(\pi).
    \end{equation}
\end{proof}
\subsection{Existence of the minimizer}
\label{subsec:inf}
Now we aim to prove that the objective $\loss^\alpha(\pi)$ has a minimum over generator-based plans. 
First, we need the following technical lemma about sublevels of the KL part of the functional.
\begin{lemma}
\label{lemma:tightness}
    Let $\{\pi^{n}\}_{n = 1}^{\infty}$ be a sequence of generator-based plans that satisfy 
    \begin{equation}
        \int\limits_{0}^{\Tmax} \klweight \textup{\KL}\left(\pi^n_{\y, t} \,\|\, \ptarget_t \right) \rmd t \leq C
    \end{equation}
    for some constant $C$. Then, the sequence $\{\pi^n\}_{n = 1}^{\infty}$ is tight.
\end{lemma}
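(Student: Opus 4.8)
The plan is to deduce tightness of the joint plans $\pi^n$ on $\Rd\times\Rd$ from tightness of their two marginals. Each $\pi^n$ is generator-based, so its $\x$-marginal is the fixed measure $\psource$, which is automatically tight; hence it suffices to show that the $\y$-marginals $\{\pi^n_\y\}$ are tight. This reduction is enough because, given $\eps>0$, choosing compacts $K_\x,K_\y\subset\Rd$ with $\psource(K_\x)\ge 1-\eps/2$ and $\pi^n_\y(K_\y)\ge 1-\eps/2$ for every $n$ yields $\pi^n(K_\x\times K_\y)\ge 1-\eps$, and $K_\x\times K_\y$ is compact.

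To control $\{\pi^n_\y\}$ I will establish a uniform bound on the second moments $\E_{\pi^n_\y}\|\y\|^2$ and then apply Corollary~\ref{thm:prokhorov_bound} with $\varphi(\y)=\|\y\|^2$, whose sublevel sets are closed balls, hence compact in $\Rd$. The second moment is extracted from the hypothesis via the Donsker--Varadhan formula (Theorem~\ref{thm:donsker_varadhan}). Fix $t\in[0,\Tmax]$ and a small constant $\eps_0>0$; applying Theorem~\ref{thm:donsker_varadhan} to $\KL(\pi^n_{\y,t}\,\|\,\ptarget_t)$ with the test function $g(\y_t)=\eps_0\|\y_t\|^2$ gives
\[
\eps_0\,\E_{\pi^n_{\y,t}}\|\y_t\|^2 \;\le\; \KL\bigl(\pi^n_{\y,t}\,\|\,\ptarget_t\bigr) + \log\E_{\ptarget_t}e^{\eps_0\|\y_t\|^2}.
\]
By Assumption~\ref{A1} the support of $\ptarget$ lies in some ball of radius $R$, so a sample from $\ptarget_t=\ptarget*q_t$ can be written as $\y_0+\sigma_t\z$ with $\|\y_0\|\le R$ and $\z\sim\N(0,I)$, and $\|\y_0+\sigma_t\z\|^2\le 2R^2+2\sigma_t^2\|\z\|^2$ yields $\E_{\ptarget_t}e^{\eps_0\|\y_t\|^2}\le e^{2\eps_0R^2}\,\E_{\N(0,I)}e^{2\eps_0\sigma_t^2\|\z\|^2}$. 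Since $\sigma_t$ is continuous on the compact interval $[0,\Tmax]$ (Assumption~\ref{A4}) it is bounded by some $\bar\sigma$, so for $\eps_0<1/(4\bar\sigma^2)$ the Gaussian integral $\E_{\N(0,I)}e^{2\eps_0\sigma_t^2\|\z\|^2}=(1-4\eps_0\sigma_t^2)^{-d/2}$ is finite and bounded uniformly in $t$; let $\bar M$ be a uniform bound for the resulting logarithm. Finally, writing $\pi^n_{\y,t}=\pi^n_\y*q_t$ and using independence of the Gaussian noise, $\E_{\pi^n_{\y,t}}\|\y_t\|^2=\E_{\pi^n_\y}\|\y\|^2+\sigma_t^2 d\ge\E_{\pi^n_\y}\|\y\|^2$. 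Combining these three estimates, for every $t$,
\[
\eps_0\,\E_{\pi^n_\y}\|\y\|^2 \;\le\; \KL\bigl(\pi^n_{\y,t}\,\|\,\ptarget_t\bigr) + \bar M.
\]

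Now multiply this pointwise inequality by $\klweight$ and integrate over $t\in[0,\Tmax]$. The right-hand side integrates to at most $C+\bar M\int_0^{\Tmax}\klweight\,\rmd t$, which is finite by Assumption~\ref{A3}, while the left-hand side equals $\eps_0\bigl(\int_0^{\Tmax}\klweight\,\rmd t\bigr)\E_{\pi^n_\y}\|\y\|^2$, where $\int_0^{\Tmax}\klweight\,\rmd t>0$ because $\klweight$ is everywhere positive. Dividing through yields a bound on $\E_{\pi^n_\y}\|\y\|^2$ independent of $n$; Corollary~\ref{thm:prokhorov_bound} then gives tightness of $\{\pi^n_\y\}$, and the marginal argument above upgrades this to tightness of $\{\pi^n\}$.

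The delicate point is the transfer from the $L^1(\rmd t)$ bound on the noised divergences $\KL(\pi^n_{\y,t}\,\|\,\ptarget_t)$ to an estimate on the un-noised marginal $\pi^n_\y$: at $t=0$ the divergence $\KL(\pi^n_\y\,\|\,\ptarget)$ is generically $+\infty$ (the pushforward $\pi^n_\y$ need not be absolutely continuous with respect to $\ptarget$), so no single noise level suffices and one must average over $t$. The mechanism that makes this work is the uniform sub-Gaussian tail of $\ptarget_t=\ptarget*q_t$, which is exactly where the compact-support hypothesis (Assumption~\ref{A1}) and the boundedness of $\sigma_t$ (Assumption~\ref{A4}) enter, and it is why the exponential-moment/Donsker--Varadhan argument is essential rather than an elementary bound such as Pinsker's inequality.
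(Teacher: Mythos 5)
Your proof is correct and takes essentially the same route as the paper: Donsker--Varadhan with a quadratic test function to extract a uniform second-moment bound on the $\y$-marginals, Corollary~\ref{thm:prokhorov_bound} for tightness of $\{\pi^n_\y\}$, and the fixed $\x$-marginal $\psource$ to upgrade to tightness of the joints. The only difference is cosmetic but slightly cleaner on your side: the paper fixes $g(\y)=\|\y\|^2/(2\sigma_{\Tmax}^2)$ and bounds the resulting exponential moment by expanding the square and arguing continuity in $t$, whereas your choice $g(\y)=\eps_0\|\y\|^2$ with $\eps_0<1/(4\bar\sigma^2)$, combined with $\|\y_0+\sigma_t\z\|^2\le 2R^2+2\sigma_t^2\|\z\|^2$, gives an explicit uniform bound on the log-moment and sidesteps the borderline case $t=\Tmax$, where the paper's exponent coefficient is exactly $1/2$.
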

\begin{proof}
    We take arbitrary $\pi$ from the sequence and apply the Donsker-Varadhan representation (Theorem~\ref{thm:donsker_varadhan}) of the KL divergence. We take the test function $g(\x) = \|x\|^2 / (2\sigma_{\Tmax}^2)$ and obtain
    \begin{equation}
        \int\limits_{0}^{\Tmax} \klweight \KL\left(\pi_{\y, t} \,\|\, \ptarget_t\right) \rmd t \geq \int\limits_{0}^{\Tmax} \klweight \left(\E_{\pi_{\y, t}(\y_t)} \frac{1}{2\sigma_{\Tmax}^2} \|\y_t\|^2 - \log \E_{\ptarget_t(\y_t)}e^{ \|\y_t\|^2 / (2\sigma_{T}^2)} \right) \rmd t.
    \end{equation}
    The choice of $g(\x)$ is not very specific, i.e. every function that will produce finite expectations and integrals is suitable. In the right-hand side, we rewrite the expectations with repect to the original variable and noise:
    \begin{equation}
    \int\limits_{0}^{\Tmax} \klweight \left(\E_{\pi_{\y}(\y)\N(\eps | 0, I)}\frac{1}{2\sigma_{\Tmax}^2} \|\y + \sigma_t \eps \|^2 - \log \E_{\ptarget(\y)\N(\eps | 0, I)}e^{\|\y + \sigma_t \eps \|^2 / (2 \sigma_{\Tmax}^2)}\right) \rmd t.
    \end{equation}
    We rewrite $\|\y + \sigma_t \eps \|^2$ as $\|\y\|^2 + 2 \sigma_t \langle \y, \sigma_t \eps \rangle + \sigma_t^2 \|\eps\|^2$ and note that expectation of the second term is zero. The first term is then equal to
    \begin{equation}
        \frac{1}{2\sigma_{\Tmax}^2} \int\limits_{0}^{\Tmax} \klweight \rmd t \cdot \E_{\pi_{\y}(\y)}\|\y\|^2 + \frac{1}{2\sigma_{\Tmax}^2} \int\limits_{0}^{\Tmax} \klweight \sigma_{t}^2 \rmd t \cdot \E_{\N(\eps | 0, I)} \|\eps \|^2.
    \end{equation}
    Boundedness of $\omega_t$ (Assumption~\ref{A3}) implies that the first integral is finite and, say, equal to $C_1$. The second integral contains a product of bounded $\omega_t$ and continuous $\sigma_t^2$ (Assumtion~\ref{A4}), which is also integrable. We then denote the second summand by $C_2$ and rewrite the first summand as
    \begin{equation}
        C_1 \E_{\pi_{\y}(\y)} \| \y \|^2 + C_2.
    \end{equation}
    As for the second summand, we see that the expectation    \begin{equation}
        \E_{\ptarget(\y)\N(\eps | 0, I)}e^{\|\y + \sigma_t \eps\|^2 / (2 \sigma_{\Tmax}^2)}
    \end{equation}
    with respect to $\eps$ will be finite, because $\sigma_t^2 / (2\sigma_{\Tmax}^2)$ is always less than $1/2$, which will make the exponent have negative degree. Moreover, simple calculations show that this function will be continuous with respect to $\sigma_t$ and have only quadratic terms with respect to $\y$ inside the exponent, i.e. have the form
    \begin{equation}
        e^{a(\sigma_t) \|\y - b(\sigma_t)\|^2 + c(\sigma_t)}
    \end{equation}
    with continuous $a, b, c$. We now want to prove that the expectation 
    \begin{equation}
        \E_{\ptarget(\y)}e^{\alpha(\sigma_t) \|\y - \beta(\sigma_t)\|^2 + \gamma(\sigma_t)}
    \end{equation}
    will also be continuous in $t$. First, due to the boundedness of $\y$, this expectation is finite. Second, for $t_n \rightarrow t$:
    \begin{align}
        \lim\limits_{n \rightarrow \infty}&\E_{\ptarget(\y)}e^{a(\sigma_{t_n}) \|\y - b(\sigma_{t_n})\|^2 + c(\sigma_{t_n})} = \\
        = \:&\E_{\ptarget(\y)} \lim\limits_{n \rightarrow \infty} e^{a(\sigma_{t_n}) \|\y - b(\sigma_{t_n})\|^2 + c(\sigma_{t_n})} = \\
        = \:&\E_{\ptarget(\y)} e^{a(\sigma_{t}) \|\y - b(\sigma_{t})\|^2 + c(\sigma_{t})}
    \end{align}
    due to the Theorem~\ref{thm:lebesgue} (Lebesgue's dominated convergence). It is applicable, since $\y$ is bounded and all the functions are continuous, thus bounded in $[0, T]$.

    We thus obtain that the second integral contains bounded $\omega_t$ multiplied by the logarithm of continuous function, which is always $\geq 1$ (positive exponent). This means that the whole integral is finite. Denoting it by $C_3$, we obtain
    \begin{equation}
        C_1 \E_{\pi_{\y}(\y)} \|\y\|^2 + C_2 - C_3\leq \int\limits_{0}^{\Tmax} \klweight \KL\left(\pi_{\y, t} \,\|\, \ptarget_t\right) \rmd t.
    \end{equation}
    Combined with the condition of the lemma, we obtain
    \begin{equation}
        C_1 \E_{\pi_{\y}(\y)} \|\y\|^2 + C_2 - C_3 \leq \int\limits_{0}^{\Tmax} \klweight \KL\left(\pi_{\y, t} \,\|\, \ptarget_t\right) \rmd t  \leq C,
    \end{equation}
    which implies
    \begin{equation}
        \E_{\pi_{\y}(\y)} \|\y\|^2 \leq \frac{C + C_3 - C_2}{C_1} := C_4.
    \end{equation}
    We thus obtained a uniform bound on some statistic with respect to all measures from $\{\pi^n\}$. The function $\|\y\|^2$ has compact sublevel sets $\{ \|\y\|^2 \leq r\}$. Lemma~\ref{thm:prokhorov_bound} then states that the sequence $\pi^{n}_{\y}$ is tight, i.e. for all $\eps > 0$ there is a compact set $K_\eps$ with $\pi^{n}_{\y}(\y \in K_\eps) \geq 1 - \eps$. 

    Finally, marginal $\x$ distribution of each of the $\pi^{n}$ is $\psource$, which is bounded (Assumption~\ref{A1}), i.e. there is a compact $K$ that $\pi^n(\x \in K) = 1$. Combined with the previous observation, we obtain
    \begin{equation}
        \pi^n(\x \in K, \y \in K_\eps) \geq 1 - \eps
    \end{equation}
    for all $n$. The cartesian product $K \times K_\eps$ is also compact. Theorem~\ref{thm:prokhorov} (Prokhorov) then implies that the sequence $\pi^n$ is tight.
\end{proof}

Now we are ready to prove the following
\begin{lemma}
    Infimum of the loss $\loss^\alpha(\pi)$ over all generator-based transport plans $\pi$ (with $\pi_{\x} = \psource$ and $\pi(\y = G(\x))$ for some $G$) is attained on some plan $\hat{\pi}$.
\end{lemma}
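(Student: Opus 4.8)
The plan is the direct method of the calculus of variations, followed by a ``repair'' step: take a minimizing sequence of generator-based plans, extract a weakly convergent subsequence using the tightness already established in Lemma~\ref{lemma:tightness}, pass to the limit by the lower semi-continuity proved in Section~\ref{subsec:loss_lsc}, and then replace the (possibly non-deterministic) weak limit by a genuinely generator-based plan of no larger cost.

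First I would record that the infimum $m := \inf \loss^\alpha(\pi)$ over generator-based plans is finite: testing with the plan induced by the Brenier map $G^\infty$ between $\psource$ and $\ptarget$ (Theorem~\ref{thm:brenier}), we get $\pgen = \ptarget$, so every $\KL(\pgen_t \,\|\, \ptarget_t)$ vanishes, while $\E c(\x, G^\infty(\x)) < \infty$ since both supports are bounded (Assumption~\ref{A1}). Now take a minimizing sequence $\{\pi^n\}$ of generator-based plans with $\loss^\alpha(\pi^n) \to m$; for large $n$, $\loss^\alpha(\pi^n) \le m+1$, and since the transport term is nonnegative, $\int_0^{\Tmax} \klweight \KL(\pi^n_{\y,t} \,\|\, \ptarget_t)\,\rmd t \le (m+1)/\alpha$, so Lemma~\ref{lemma:tightness} gives tightness of $\{\pi^n\}$. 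By Prokhorov's theorem (Theorem~\ref{thm:prokhorov}) some subsequence satisfies $\pi^{n_k} \weak \hat\pi$. Because projection preserves weak convergence (Section~\ref{subsec:loss_lsc}) and each $\pi^{n_k}$ has $\x$-marginal $\psource$, the limit satisfies $\hat\pi_{\x} = \psource$; and by lower semi-continuity of $\loss^\alpha$, $\loss^\alpha(\hat\pi) \le \liminf_k \loss^\alpha(\pi^{n_k}) = m < \infty$.

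It remains to make $\hat\pi$ generator-based without increasing the cost. The key structural point is that the KL term of $\loss^\alpha$ depends on $\pi$ only through its $\y$-marginal. Put $\mu := \hat\pi_{\y}$; finiteness of $\loss^\alpha(\hat\pi)$ together with the moment estimate derived inside the proof of Lemma~\ref{lemma:tightness} yields $\E_{\mu}\|\y\|^2 < \infty$. Since $\psource$ is absolutely continuous (Assumption~\ref{A1}) and both $\psource$ and $\mu$ have finite second moments, Brenier's theorem (which needs only the source to be absolutely continuous) produces a unique optimal map $\tilde G$ pushing $\psource$ to $\mu$ for the quadratic cost; let $\tilde\pi$ be the plan it induces. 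Then $\tilde\pi$ is generator-based, $\tilde\pi_{\y} = \mu = \hat\pi_{\y}$ so the KL term is unchanged, and $\E_{\tilde\pi} c \le \E_{\hat\pi} c$ by optimality of $\tilde G$; hence $\loss^\alpha(\tilde\pi) \le \loss^\alpha(\hat\pi) \le m$. As $\tilde\pi$ is generator-based we also have $\loss^\alpha(\tilde\pi) \ge m$, so $\loss^\alpha(\tilde\pi) = m$ and $\tilde\pi$ is the desired minimizer.

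The main obstacle is precisely this last step: a weak limit of plans concentrated on graphs of maps need not itself be concentrated on a graph, so $\hat\pi$ produced by compactness need not be generator-based. The resolution exploits that the regularizer sees only the $\y$-marginal, allowing us to re-project $\hat\pi$ onto the Brenier map between $\psource$ and $\hat\pi_{\y}$; the care points are checking that Brenier's theorem is applicable here (source absolutely continuous, both marginals with finite second moments) and that the required second-moment bound on $\hat\pi_{\y}$ is exactly the one already extracted in the proof of Lemma~\ref{lemma:tightness}. Everything else (finiteness of $m$, tightness, semi-continuity, the marginal constraint in the limit) is routine given the results assembled above.
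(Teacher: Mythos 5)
Your proof follows the same direct-method skeleton as the paper's---finiteness of the infimum via the Brenier plan, a minimizing sequence, tightness from Lemma~\ref{lemma:tightness}, a weakly convergent subsequence via Prokhorov, and lower semi-continuity to pass to the limit---but you add a repair step that the paper actually omits. The paper stops at $\liminf_k \loss^\alpha(\pi^{n_k}) \geq \loss^\alpha(\hat\pi) \geq \loss^\alpha_{\inf}$ and declares $\hat\pi$ the minimizer, yet the second inequality is only justified if $\hat\pi$ is itself generator-based, and a weak limit of plans concentrated on graphs of maps need not concentrate on a graph; this is precisely the obstacle you flagged. Your fix is the right one: since the KL part of $\loss^\alpha$ depends on $\pi$ only through $\pi_{\y}$, one may replace $\hat\pi$ by the plan induced by the (unique) Brenier/McCann map from $\psource$ to $\mu := \hat\pi_{\y}$---valid because $\psource$ is absolutely continuous by Assumption~\ref{A1} and $\mu$ has finite second moment by the estimate inside Lemma~\ref{lemma:tightness}'s proof---which keeps the KL term fixed and cannot increase the quadratic transport cost. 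Together with $\hat\pi_{\x} = \psource$ (marginal preservation under weak limits, which you noted), this produces a genuinely generator-based plan attaining the infimum. In short, your argument is correct and in fact more rigorous than the paper's own proof of this lemma; the extra Brenier re-projection step you supply is the piece the paper's version is missing.
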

\begin{proof}
    We start by observing that there is at least one feasible $\pi$ with the aforementioned properties. For this purpose one can take the optimal transport map $G^\infty$ between $\psource$ and $\ptarget$, which is unique by Theorem~\ref{thm:brenier} under Assumptions~\ref{A1},~\ref{A2}.

    Let $\pi^n$ be a sequence of feasible generator-based measures that $\loss^\alpha(\pi^n)$ converges to the corresponding infimum $\loss^\alpha_{\inf}$ (it exists by the definition of the infimum). Without loss of generality, we can assume that $\loss^\alpha(\pi^n) \leq \loss^\alpha_{\inf} + 1$ for all $n$ (if not, one can drop large enough sequence prefix). This implies that for all $n$ holds

    \begin{equation}\alpha\, \int\limits_{0}^{\Tmax} \klweight \KL\left(\pi_{\y, t} \,\|\, \ptarget_t\right) \rmd t \leq \loss^\alpha_{\inf} + 1.
    \end{equation}

    Lemma~\ref{lemma:tightness} implies that the sequence $\pi^n$ is tight. Prokhorov theorem then states that $\pi^n$ has a weakly convergent subsequence $\pi^{n_k} \weak \hat{\pi}$. Lower semi-continuity of the loss $\loss^\alpha$ implies that
    \begin{equation}
        \liminf\limits_{k \rightarrow \infty} \loss^\alpha(\pi^{n_k}) \geq \loss^\alpha(\hat{\pi}) \geq \loss^\alpha_{\inf}.
    \end{equation}
    At the same time, $\loss^\alpha(\pi^{n_k})$ is assumed to converge to $\loss^\alpha_{\inf}$, which means that $\hat{\pi}$ is indeed the minimum.
\end{proof}

\subsection{Finish of the proof}
\label{subsec:proof}
\begin{proof}[Theorem~\ref{thm1_app} proof]
Finally, we combine the previous technical observations with the proof sketch from the Section~\ref{subsec:proof_outline}. Let $\alpha_n \rightarrow \infty$ be a sequence of coefficients, $G^{\alpha_n}$ be the optimal generators with respect to $\loss^{\alpha_n}$ and $\pi^{\alpha_n}$ the joint distributions of $(\x, G^{\alpha_n}(\x))$. Additionally, we define $\pi^\infty$ to be the optimal transport plan, corresponding to $(\x, G^\infty(\x))$, where $G^\infty(\x)$ is the optimal transport map. First, due to the monotonicity of $\loss^\alpha$ with respect to $\alpha$, we have
\begin{equation}
    \loss^{\alpha_n}(\pi^{\alpha_n}) \leq \loss^{\alpha_{n + 1}}(\pi^{\alpha_{n + 1}}) \leq \loss^{\infty}(\pi^\infty).
\end{equation}
This implies that for all $n$ holds
\begin{align}
    \alpha_n &\int\limits_{0}^{\Tmax} \klweight \KL\left(\pi^{\alpha_n}_{\y, t} \,\|\, \ptarget_t\right) \rmd t \leq \loss^\infty(\pi^\infty) \Rightarrow \\
    \Rightarrow & \int\limits_{0}^{\Tmax} \klweight \KL\left(\pi^{\alpha_n}_{\y, t} \,\|\, \ptarget_t\right) \rmd t \leq \frac{\loss^\infty(\pi^\infty)}{\alpha_n} \leq \frac{\loss^\infty(\pi^\infty)}{\min\limits_{n} \alpha_n},
\end{align}
which is finite, since $\alpha_n \rightarrow +\infty$. One more time, we apply Lemma~\ref{lemma:tightness} and conclude that the sequence $\pi^{\alpha_n}$ is tight.

Let $\pi^{\alpha_{n_k}}$ be its weakly convergent subsequence: $\pi^{\alpha_{n_k}} \weak \hat{\pi}$. Analogously to the Section~\ref{subsec:proof_outline}, we observe that
\begin{equation}
    \liminf\limits_{k \rightarrow \infty} \loss^{\alpha_{n_k}} (\pi^{\alpha_{n_k}}) \geq \liminf\limits_{k \rightarrow \infty}\loss^{\alpha_{n_m}}(\pi^{\alpha_{n_k}}) \geq \loss^{\alpha_{n_m}}(\hat{\pi})
\end{equation}
for any fixed $m$. The first inequality is due to the monotonicity of $\loss^\alpha$ with respect to $\alpha$ and second is the implication of lower semi-continuity of the loss $\loss^\alpha$ with respect to weak convergence. Taking the limit $m \rightarrow \infty$, we obtain
\begin{equation}
     \liminf\limits_{k \rightarrow \infty} \loss^{\alpha_{n_k}} (\pi^{\alpha_{n_k}}) \geq \loss^{\infty}(\hat{\pi}).
\end{equation}

Combining all these observations, we obtain the following sequence of inequalities
\begin{equation}
    \loss^{\infty}(\pi^\infty) \geq \liminf\limits_{k \rightarrow \infty} \loss^{\alpha_{n_k}} (\pi^{\alpha_{n_k}}) \geq \loss^\infty(\hat{\pi}) \geq \loss^\infty(\pi^\infty),
\end{equation}
which implies that the limiting measure $\hat{\pi}$ reaches the minimum of the objective over generator-based plans. By the uniqueness of the optimal transport map $G^\infty$ under the Assumptions~\ref{A1},~\ref{A2},~\ref{A3}, we conclude that all the convergent subsequences $\pi^{\alpha_{n_k}}$ converge to the optimal measure $\pi^\infty$. Using Corollary~\ref{thm:prokhorov_convergence} of the Prokhorov theorem, we deduce that $\pi^{\alpha_n} \weak \pi^\infty$.

Finally, we want to replace the weak convergence of $\pi^{\alpha_n}$ to $\pi^\infty$ with the convergence in probability of the generators, i.e. show
\begin{equation}
    G^{\alpha_n} \xrightarrow[]{\psource} G^\infty.
\end{equation}
To this end, we represent the corresponding probability as the expectation of the indicator and upper bound it with a continuous function:
\begin{align}
    \psource\left(\|G^{\alpha_n}(\x) - G^{\infty}(\x)\| > \eps\right) &= \E_{\psource(\x)}I\{\|G^{\alpha_n}(\x) - G^{\infty}(\x)\| > \eps\} \\
    &\leq \E_{\psource(\x)} d\left(G^{\alpha_n}(\x), G^{\infty}(\x)\right),
\end{align}
where $d$ is a continuous indicator approximation, defined as
\begin{equation}
    d(\bu, \bv) = \begin{cases}
        \frac{\|\bu - \bv \|}{\eps}, &\text{ if }\, 0 \leq \|\bu - \bv \| < \eps;\\
        1, &\text{ if }\, \|\bu - \bv\| \geq \eps.
    \end{cases}
\end{equation}
We define the test function
\begin{equation}
    \varphi(\x, \y) = d\left(\y, G^\infty(\x)\right)
\end{equation}
and rewrite the upper bound as
\begin{equation}
    \E_{\psource(\x)} d\left(G^{\alpha_n}(\x), G^{\infty}(\x)\right) = \E_{\psource(\x)} \varphi(\x, G^{\alpha_n}(\x)) = \E_{\pi^{\alpha_n}(\x, \y)} \varphi(\x, \y).
\end{equation}

Due to Assumptions~\ref{A1},~\ref{A2} and Theorem~\ref{thm:continuity} the optimal transport map $G^\infty$ is continuous, which implies that this test function is bounded and continuous. Given the weak convergence of $\pi^{\alpha_n}$, we have
\begin{align}
    \E_{\pi^{\alpha_n}(\x, \y)} \varphi(\x, \y) &\rightarrow \E_{\pi^\infty(\x, \y)}\varphi(\x, \y) = \E_{\psource(\x)}\varphi(\x, G^\infty(\x)) =\\
    &= \E_{\psource(\x)} d(G^\infty(\x), G^\infty(\x)) = 0,
\end{align}
which implies the desired
\begin{equation}
    \psource\left(\|G^{\alpha_n}(\x) - G^{\infty}(\x)\| > \eps\right) \rightarrow 0.
\end{equation}
\end{proof}
\section{Ablation of the initialization parameter}
\label{sec:sigma_ablation}
In this section, we further explore the design space of our method by investigating the effect of the fixed generator input noise parameter $\sigma$ on the resulting quality.  To this end, we take the colored version of the MNIST~\cite{lecun1998mnist} data set and perform translation between the digits "2" and "3" initializing from various $\sigma$. We use a small UNet architecture from~\cite{gushchin2024entropic}.

The parameter $\sigma$ is residual from the pre-trained diffusion architecture and therefore fixed throughout training and evaluation. However, the target denoiser network tries to convert the expected noisy input into the corresponding sample from the output distribution. Consequently, one may expect that at a suitable noise level, the generator may change the input's details to make them look appropriate for the target while preserving the original structural properties.

We demonstrate this effect on various noise levels in Figure~\ref{fig:mnist_sigma}. Here we observe that the small sigmas lead to the mapping close to the identity, whereas the large sigmas lead to almost constant blurry images, corresponding to the average "3" of the data set. However, there is a segment $[1.0, 10.0]$ of levels that gives a moderate-quality mapping in terms of both faithfulness and realism, which makes it a suitable initial point. Note that the FID-L2 plot is not monotone at high L2 values due to the overall poor quality of the generator, i.e. it outputs bad-quality pictures slightly related to the source.
\begin{figure}
\centering
\includegraphics[width=0.9\linewidth]{}
\caption{Left: visualization of the generator initialization at various $\sigma~\in~[0.1, 80.0]$, where $\sigma$ is the noise level parameter residual from the pre-trained diffusion architecture. Right: comparison of different $\sigma$ in terms of the quality-faithfulness trade-off. The metrics are obtained by initializing the generator at the corresponding $\sigma$ level and training it with the RDMD procedure. Here, $\lambda \in \{0, 1.0, 2.0, 4.0\}$. Higher $\lambda$ corresponds to the lower transport cost values.}
\label{fig:mnist_sigma}
\end{figure}
We further investigate optimal $\sigma$ choice by going through a 2D grid of the hyperparameters $(\sigma, \lambda)$ and aim to see if it is possible to choose the uniform best noise level. In Figure~\ref{fig:mnist_sigma} we report the faithfulness-quality trade-off concerning various $\sigma$. We observe that there is almost monotone dependence on $\sigma$ on the segment $[1.0, 40.0]$: here the $\sigma = 1.0$ gives almost uniformly best results in terms of both metrics. Similar results are obtained by the values $5.0, 10.0$ which have fair quality visual results at initialization. Therefore, we conclude that it is best to choose the least parameter $\sigma$ among the parameters with appropriate visuals at the initial point.
\section{Experimental Details}
\label{sec:exp_details}
\subsection{2D experiments}
\label{subsec:2d_details}
\paragraph{Architecture.} The architecture used for training diffusion model and generator ~\cite{de2021diffusion} consists of input-encoding MLP block, time-encoding MLP block, and decoding MLP block. Input encoding MLP block consists of 4 hidden layers with dimensions $\left[16, 32, 32, 32\right]$ interspersed by LeakyReLU activations. Time encoding MLP consists of a positional encoding layer \cite{NIPS2017_3f5ee243} and then follows the same MLP block structure as the input encoder. The decoding MLP block consists of 5 hidden layers with dimensions $\left[128, 256, 128, 64, 2\right]$ and operates on concatenated time embedding and input embedding each obtained from their respective encoder.
The model contains $88k$ parameters.

\paragraph{Training Diffusion Model.} Diffusion model was trained for 100k iterations with batch size 1024 with Adam optimizer ~\cite{Kingma2014AdamAM} with learning rate $10^{-4}$.

\paragraph{Training RDMD.} Fake denoising network was trained with Adam optimizer with learning rate $10^{-4}$. The generator model was trained with a different Adam optimizer with a learning rate equal to $2 \cdot 10^{-5}$. We trained RDMD for 100k iterations with batch size 1024.

\paragraph{Computational resources.} All the experiments were run on CPU. Running 100k iterations with the batch size 1024 took approximately 1 hour.

% \paragraph{Metrics.} We are interested in evaluating: \textbf{a)} how well $G_\theta(\x)$ fits $\ptarget$, \textbf{b)} how transport cost between $\psource$ and $\ptheta_0$ is affected by the choice of $\lambda$. To quantify the former, we calculate the Discrete Optimal Transport (Earth Mover's Distance) between $\ptheta_0$ and $\ptarget$ and compute quadratic cost (denoted by $\mathcal{W}_2^2(\ptheta_0, \ptarget)$) between the obtained couplings. The latter is measured as an average quadratic cost between inputs and outputs of $G_\theta$, denoted as L2.

\subsection{Colored MNIST}
\label{subsec:mnist_details}

\paragraph{Architecture.} We used the architecture from~\cite{gushchin2024entropic}, which utilizes convolutional UNet with conditional instance normalization on time embeddings used after each upscaling block of the decoder\footnote{\url{https://github.com/ngushchin/EntropicNeuralOptimalTransport/blob/06efb6ba8b43865a30b0b626384fa64da39bc385/src/cunet.py}}. Model produces time embeddings via positional encoding. The model size was approximately $9.9M$ parameters.

\paragraph{Training Diffusion Model.} The diffusion model was trained for 24500 iterations with batch size 8192. We used the Adam optimizer with a learning rate equal to $4 \cdot 10^{-3}$. The model was trained in FP32. It obtained FID equal to 2.09.

\paragraph{Training RDMD.} Fake denoising network was trained with Adam optimizer with a learning rate equal to $2 \cdot 10^{-3}$. The generator model was trained with Adam optimizer with learning rate $5 \cdot 10^{-5}$. RDMD was trained for 7300 iterations with batch size 4096.

\paragraph{Computational resources.} All the experiments were run on 2x NVIDIA GeForce RTX 4090 GPUs. Training Diffusion model for 24500 iterations with batch size 8192 took approximately 6 hours. Training RDMD for 7300 iterations with batch size 4096 took approximately 3 hours.
\subsection{Cat2Wild}
\label{subsec:c2w_details}
\paragraph{Architecture.} We used the SongUNet architecture from EDM repository\footnote{\url{https://github.com/NVlabs/edm/blob/008a4e5316c8e3bfe61a62f874bddba254295afb/training/networks.py}}, which corresponds to DDPM++ and NCSN++ networks from the work~\cite{song2020score}. The model contains approximately $55M$ parameters.

\paragraph{Training Diffusion Model.} The diffusion model was trained for 80k iterations. We set the batch size to 512 and chose the best checkpoint according to FID. We used the Adam optimizer with a learning rate equal to $2 \cdot 10^{-4}$. We used a dropout rate equal to $0.25$ during training and the augmentation pipeline from ~\cite{karras2022elucidating} with probability $0.15$. The model was trained in FP32. It obtained FID equal to 2.01.

\paragraph{Training RDMD.} In all runs we initialized the generator from the target diffusion model with the fixed $\sigma=1.0$. We've run 4 models, corresponding to the regularization coefficients $\{0.02, 0.05, 0.1, 0.2\}$. All models were trained with Adam optimizer with generator learning rate $5 \cdot 10^{-5}$ and fake diffusion learning rate $3 \cdot 10^{-4}$. We trained all models for 25000 iterations with batch size 512. Training took approximately 35 hours on $4\times$ NVidia Tesla A100 80GB.

\paragraph{ILVR hyperparameters.}
The only hyperparameter of ILVR is the downsampling factor $N$ for the low-pass filter, which determines whether guidance would be conducted on coarser or finer information. $n_{\text{steps}}$ denotes number of sampling steps. All metrics in Figure~\ref{fig:c2w_metrics} for ILVR were obtained on the following hyperparameter grid:
$N = [2, 4, 8, 16, 32]$, $n_{\text{steps}} = [18, 32, 50]$. We excluded runs that have the same statistical significance and achieve FID higher than $20.0$. The images in Figure~\ref{fig:c2w_comparison} were obtained with hyperparameters $N = 16$ and $n_{\text{steps}} = 18$.
\paragraph{SDEdit hyperparameters.}
The only hyperparameter of SDEdit is the noise level $\sigma$, which acts as a starting point for sampling. The higher the noise level, the closer is the sampling procedure to the unconditional generation. The smaller the noise values, the more features are carried over to the target domain at the expense of generation quality.  $n_{\text{steps}}$ denotes number of sampling steps. All metrics in Figure~\ref{fig:c2w_metrics} for SDEdit were obtained on the following hyperparameter grid:
$\sigma = [4, 5, 10, 15, 20, 30, 40]$, $n_{\text{steps}} = [18, 32, 50]$. We exclude runs that have the same statistical significance and achieve FID higher than $20.0$. The images in Figure~\ref{fig:c2w_comparison} were obtained with hyperparameters $\sigma = 10$ and $n_{\text{steps}} = 50$.
\paragraph{EGSDE hyperparameters.}
EGSDE sampling hyperparameters include the initial noise level $\sigma$ at which the source image is perturbed, and the downsampling factor $N$ for the low-pass filter. $n_{\text{steps}}$ denotes number of sampling steps. All metrics in Figure~\ref{fig:c2w_metrics} for EGSDE were obtained on the following hyperparameter grid:
$\sigma = [5, 10, 15, 20, 40]$, $N = [8, 16, 32]$, $n_{\text{steps}} = [18, 32]$.
We exclude runs that have the same statistical significance and achieve FID higher than $20.0$. The images in Figure~\ref{fig:c2w_comparison} were obtained with hyperparameters $\sigma=10, N=32, n_{\text{steps}} = 50$.

%%%%%%%%%%%%%%%%%%%%%%%%%%%%%%%%%%%%%%%%%%%%%%%%%%%%%%%%%%%%%%%%%%%%%%%%%%%%%%%
%%%%%%%%%%%%%%%%%%%%%%%%%%%%%%%%%%%%%%%%%%%%%%%%%%%%%%%%%%%%%%%%%%%%%%%%%%%%%%%

\end{document}